\documentclass[11pt]{article}

\usepackage[final]{acl}

\usepackage{times}
\usepackage{latexsym}

\usepackage[utf8]{inputenc} 
\usepackage[T1]{fontenc}    
\usepackage{booktabs}       
\usepackage{graphicx}       
\usepackage{amsfonts}       
\usepackage{nicefrac}       
\usepackage{microtype}      
\usepackage{xcolor}         
\usepackage{amsmath}
\usepackage{amsthm}       
\usepackage{algorithm}     
\usepackage{algpseudocode} 
\usepackage{multirow}      
\usepackage{enumitem}      
\usepackage{subcaption}    
\usepackage{cleveref}

\usepackage{todonotes}
\usepackage{fancyvrb}
\usepackage{fvextra}
\usepackage{placeins}
\usepackage{thmtools}
\newtheorem{proposition}{Proposition}
\newtheorem{lemma}{Lemma}

\newtheorem*{remark}{Remark}
\newtheorem{definition}{Definition}

\newlength{\itemizelength}
\title{Unifying Conformal Language Tasks with In-Context Ensembles}

\author{
  \textbf{Xiao Shi Huang\textsuperscript{1}} \quad
  \textbf{Chen-Yuan Lin\textsuperscript{1}} \quad
  \textbf{Bruce Kuwahara\textsuperscript{1}} \\
  \textbf{Kin Kwan Leung\textsuperscript{2}} \quad
  \textbf{Jesse C. Cresswell\textsuperscript{2}} \\
  \textsuperscript{1}Signal 1 AI \quad  \textsuperscript{2}Layer 6 AI\\
  \small{\texttt{\{gary, thomas, bruce\}@signal1.ai}} \quad
  \small{\texttt{\{kk, jesse\}@layer6.ai}}
}

\begin{document}

\maketitle

\begin{abstract}
Many NLP tasks, such as summarization and extractive question answering, reduce to retrieving relevant content from documents under two constraints: \emph{coverage}, retaining enough pertinent information to achieve some goal, and \emph{conciseness}, removing as much irrelevant information as possible. Conformal prediction methods have been used to guarantee coverage, and must be optimized for conciseness through design of a score function. State-of-the-art scoring functions use hand-engineered LLM prompts asking the model to rate the importance of content, but manual prompt engineering is labor-intensive and task-specific. We introduce the \emph{Conformal Relevance} framework which uses in-context learning example curation and ensembling to create a score function which maintains coverage while improving conciseness with minimal manual input. We demonstrate this framework's application on seven NLP tasks, and also theoretically study the impact of diversity for ensembled conformal scores, giving a complementarity condition that characterizes when ensembling improves worst-case sentence scores, and a saturation bound on ensemble improvement.

\end{abstract}

\section{Introduction}
\label{sec:intro}

Many NLP tasks, including summarization~\citep{mukherjee-etal-2022-ectsum}, extractive question-answering (QA)~\citep{yang2018hotpotqa}, legal review~\citep{koreeda2021contractnli}, and clinical evidence selection~\citep{deyoung2020evidence}, reduce to retrieving relevant content from documents. Systems for these tasks must satisfy two demands: \emph{coverage} (relevant content is retained) and \emph{conciseness} (irrelevant content is excluded). Conformal prediction~\citep{vovk2005algorithmic} has become popular as a general purpose framework for providing distribution-free, finite sample coverage guarantees. It operates by calibrating an arbitrary scoring function over a labeled dataset. While coverage is guaranteed, conciseness strongly depends on the scoring function's predictive power.

Several examples of this framework have been studied in detail, differing in their relevance criterion. For question-answering, \citet{mohri2024conformal} define relevant content as non-hallucinated claims in the answer, while hallucinations should be filtered out. \citet{kuwahara2025conformal} look at extractive summarization, defining relevant content as important sentences, and filter out unimportant sentences for conciseness. For these NLP tasks and others, the scoring function is typically a large language model (LLM) configured through task-specific prompt engineering, which is labor-intensive, brittle, and not scalable~\citep{lu2022fantastically, min2022rethinking}.

We aim to subsume all such content-selection tasks under a general purpose relevance scoring function that replaces manual prompt-writing with in-context learning (ICL)~\citep{brown2020language, dong2023survey} example curation. Rather than \emph{describing} a relevance criterion in natural language as done in past work, we \emph{demonstrate} it through curated examples and let the LLM infer the criterion from the demonstrations.  We call this framework \emph{Conformal Relevance}: recall-oriented conformal calibration paired with a relevance scoring function, instantiated through an ICL-driven LLM score.

To address the diversity of relevance criteria in a unified way, we develop multiple ICL example selection strategies that induce LLM scoring functions with systematically different strengths and failure modes. Ensembling these scores improves performance while remaining entirely task-agnostic. Concretely, we average $K$ mechanistically distinct ICL-based scoring functions---chosen to span different signal types so that their failures are unlikely to coincide---and treat the result as a single conformal scoring function~\citep{ochoa2025conformal, waldron2026caos}. We mathematically formalize a \emph{complementarity condition} that characterizes when ensembling improves worst-case conformal scores. The marginal gain from each additional sub-scoring function gives diminishing returns, bounded by \(O(1/K)\).

We apply a fixed ICL scoring configuration on seven NLP datasets spanning five domains and show performance improvements over manually crafted prompts that define the relevance criterion per task. Our Conformal Relevance method removes substantially more irrelevant content at the same coverage on all seven tasks, with up to a ${\sim}50\%$ reduction in retained length. Through several ablations and controls, we determine that the gain stems from retrieval-induced diversity across the ensemble's in-context demonstrations. The required label budget is modest: 150--440 labels per task, comprising a shared ICL pool plus a 100-sample calibration set. Our contributions are:
\begin{enumerate}[noitemsep,topsep=0pt,leftmargin=*]
  \item A universal ICL ensemble scoring function for disparate content selection tasks with conformal coverage guarantees.   \item Four mechanistically diverse sub-scoring functions spanning distinct signal types. \item A mathematical formalization of score ensembling for recall-oriented conformal prediction.
\end{enumerate}

\vspace{-2pt}
\section{Background \& Related Work}
\vspace{-2pt}
\label{sec:background}

\label{sec:bg-conformal}

\textbf{Content selection.} Content selection is a generic task in NLP where relevant information must be extracted from a document. It occurs under many instantiations depending on the definition of relevance. The prototype is extractive summarization where relevance means a content span of the document is important~\citep{mukherjee-etal-2022-ectsum}. Extractive QA requires a model to select content from a corpus that is relevant to a question (retrieval) before using it to generate an answer~\citep{yang2018hotpotqa}, for example as done in Retrieval-Augmented Generation (RAG)~\citep {lewis2020rag}. PII detection~\citep{pilan2022text, shen2026pii, ponomarenko2026capid} (similar to named entity recognition~\citep{wang2025gptner}) selects content that constitutes private information so that it can be appropriately masked. These examples are far from comprehensive as content selection appears in a vast array of forms. Our aim is to unify content selection tasks throughout language modeling under a single framework for providing conformal guarantees on the capture of relevant content.

\textbf{Conformal prediction.} Given inputs $x\in \mathcal{X}$ and ground-truth values $y^*\in \mathcal{Y}$ drawn jointly from a distribution $(x, y^*)\sim \mathbb{P}$, split conformal prediction~\citep{vovk2005algorithmic, shafer2008tutorial} constructs a prediction set $C_{\hat q}(x_\text{test})\subseteq \mathcal{Y}$ for a new datapoint $x_\text{test}$ that come with a finite-sample coverage guarantee
\vspace{-2pt}
\begin{equation}\label{eq:coverage-guarantee}
     \mathbb{P}[y_\text{test}^* \in C_{\hat q}(x_\text{test})] \geq 1 - \alpha,
     \vspace{-2pt}
\end{equation}
where the error rate $\alpha$ is user-defined. Conformal prediction requires an arbitrary score function $S(x, y)$, which is often derived from a black-box machine learning model. By computing scores on a labeled calibration dataset, the conformal threshold $\hat q$ is set as the $\lceil(1-\alpha)(n+1)\rceil / n$ quantile of the $n$ scores. Then, prediction sets are generated as
\vspace{-2pt}
\begin{equation}\label{eq:prediction-set}
    C_{\hat q}(x_\text{test})  = \{y\in \mathcal{Y} \mid S(x_\text{test}, y) < \hat q \}.
    \vspace{-2pt}
\end{equation}
Smaller sets are preferred, \emph{ceteris paribus}. Conformal prediction is extremely useful because it makes no assumptions about the nature of scores, only that $x_\text{test}$ is exchangeable with the calibration data, a mild assumption that holds for IID settings \cite{angelopoulos2023conformal}.

\textbf{Conformal methods for content selection.} 
Conformal factuality~\citep{mohri2024conformal} studies a QA content selection setting where claims in an existing answer are filtered out until no hallucinations remain with high confidence. This is a \emph{precision}-style coverage guarantee where selected content must all fit the relevance criteria (here being non-hallucinated). Conformal factuality was further studied in RAG settings \citep{feng2025,chakraborty2026rag}. Extending from the QA setting, conformal importance \citep{kuwahara2025conformal} gives a \emph{recall}-style coverage guarantee for extractive summarization where the prediction set must select all relevant content with high confidence. Since recall-style coverage is more widely applicable to NLP tasks, we adopt it and recount conformal importance in more detail below.

Given a document $x$ with content spans $c_1,\ldots,c_m$ and a ground-truth important subset $y^* \subseteq \{c_1,\ldots,c_m\}$, a relevance function $R(c;x) \in [0,1]$ assigns an importance level to each candidate content span $c_i$. The conformal score $S_\beta$ of data point $(x, y^*)$ is set as the lowest relevance level where a fraction $\beta$ of ground-truth important content spans are retained, rounded up to the nearest whole number. If we sort the scores $R$ for each $c\in y^*$ so that $R_{(1)}\leq ... \leq R_{(|y^*|)}$, then
    \vspace{-2pt}
\begin{equation}\label{eq:s-beta}
  S_\beta(x, y^*) = R_{(r)}, \quad r = |y^*| - \lceil \beta\, |y^*| \rceil + 1.
\end{equation}
For the case $\beta=1$ of perfect recall, we write
\begin{equation}\label{eq:s}
S(x, y^*) = \min_{c \in y^{*}} R(c; x).
\end{equation}
 
\noindent A threshold $\hat{q}$ is calibrated on the conformal scores of a calibration dataset. For a new document $x_\text{test}$, any content span $c_i$ with relevance $R(c_i; x_\text{test}) < \hat{q}$ is filtered out. The remaining content spans form the prediction set $y_\text{test}\subseteq x$. This method provides a coverage guarantee that at least a fraction $\beta$ of important content per datapoint is retained with high probability \citep{kuwahara2025conformal},
    \vspace{-2pt}
\begin{equation}\label{eq:conformal_importance}
    \mathbb{P}\bigg[\frac{|y_\text{test}\cap y_\text{test}^*|}{|y_\text{test}^*|} \geq \beta\bigg] \geq 1 - \alpha.
        \vspace{-2pt}
\end{equation}
Coverage holds for \emph{any} relevance function $R$ under exchangeability of the documents $x$, but the quality of $R$---its power in classifying important content spans---determines \emph{conciseness} for the retained set.

Similar recall-style guarantees have been developed for conformal agent error attribution \citep{feng2026conformal}. In this task the relevance function predicts how likely a step in the agent's trace is to be a decisive error, and coverage ensures that decisive errors are contained in the prediction set.

Conformal factuality, conformal importance, and conformal agent error attribution all assume a fixed $S$ deriving from $R$, which is typically an LLM with a manually crafted prompt designed specifically for one definition of relevance. Generative tasks in this class have been studied generally by \citet{loaiza2026conf}. In this work, we construct a scoring function that generalizes across content selection tasks without per-domain engineering.

\textbf{Conformal ensembles.} When multiple scoring functions are available, they can be ensembled to produce stronger coverage guarantees, or more concise sets. Conformal score aggregation~\citep{ochoa2025conformal} establishes that score-level aggregation is strictly more efficient than set-level, yielding tighter prediction regions in classification and regression. \citet{gasparin2024merging} prove a $1{-}2\alpha$ coverage floor for majority-vote ensembling of prediction sets. Other score or $e$-value combination methods~\citep{luo2025conformal, alami2026symmetric} address classification and regression, not language tasks. In the language space, and building on conformal factuality, \citet{cherian2024large} propose a boosted linear combination of four heterogeneous scoring functions with level-adaptive conformal risk control~\cite{angelopoulos2024conformal}. Our work characterizes when ensembling provides a conciseness advantage in recall-oriented conformal language tasks. We discuss additional related work on conformal aggregation in Appendix \ref{sec:app-related}.

\textbf{ICL example selection.} A substantial literature studies which in-context examples to present to an LLM, spanning similarity-based retrieval~\citep{liu2022makes} and learned retrievers~\citep{rubin2022learning}. Ordering and input--label mapping substantially affect downstream performance~\citep{lu2022fantastically, min2022rethinking}, and diverse, balanced demonstrations improve compositional generalization~\citep{levy2023diverse}. Determinantal point process (DPP) based selection over in-context examples~\citep{ye2023compositional,sui2024lfr} is closest in spirit to our work, but instead of generation we consider relevance scoring within conformal frameworks.

\section{Theoretical Framework}
\label{sec:theory}

\begin{table*}[t]
\centering
\caption{Notation}\label{tab:notation}
\vspace{-6pt}
\footnotesize
\begin{tabular}{@{}ll@{\quad}ll@{}}
\toprule
$x$, $c$, $y^*$ & document, content span, relevant set & $S_\beta$ & conformal score (Eq.~\ref{eq:s-beta} for $\beta=1$) \\
$R_j$, $\bar R^{(K)}$ & relevance fn, mean ensemble & $\hat{q}$ & calibrated threshold \\
$K$, $k$ & \# relevance fns, ICL examples per fn & $\Pi$, $\mathcal{C}$ & ICL example pool, calibration set \\
$\alpha$, $\beta$ & miscoverage, recall target & $n$ &  calibration set size \\
$S$, $S^{(K)}$ & individual / ensemble floor at $\beta=1$ & $\mathrm{Comp}$ & complementarity (Def.~\ref{def:complementarity}) \\
$S_\text{max}, S_\text{min}$ & max/min individual floor at $K{=}2$ & $\delta^K(c)$ & floor margin (Prop.~\ref{prop:diminishing}) \\
\bottomrule
\end{tabular}
\vspace{-6pt}
\end{table*}

Considering recall-style coverage guarantees for content selection tasks (Eq.~\ref{eq:conformal_importance}), we replace a single relevance function $R$ with a $K$-strategy ICL ensemble. The remainder of this section asks three questions in turn: \emph{does ensembling preserve the $1-\alpha$ coverage guarantee?}; \emph{under which conditions does ensembling with $K=2$ improve conciseness?}; and \emph{when does adding an additional $R_{K+1}$ raise the ensembled conformal score?}

In this section we consider $\beta = 1$ for clarity before mentioning results for general $\beta$. See Appendix~\ref{sec:app-proofs} for extended details.

\subsection{Problem Setup}\label{ssec:setup}

 From Eq.~\ref{eq:s}, the conformal score $S(x, y^*)$ for a given relevance function $R(c;x)$ is defined as the \emph{floor}---the lowest relevance of any positive $c\in y^*$---which we write $S(R)$.  Raising the floor will in general also raise the calibrated threshold $\hat{q}$, which shrinks prediction sets, assuming relevance scores for $c\not\in y^*$ are unchanged. As long as coverage is unaffected, raising the floor under these conditions is desirable. We aim to show that ensembling different relevance scores can raise the floor while maintaining coverage. For $K$ distinct relevance functions $R_1, \ldots, R_K$, the mean ensemble $\bar R^{(K)}(c;x) \equiv \tfrac{1}{K}\sum_{j=1}^{K} R_j(c;x)$ has ensemble floor $S^{(K)} \equiv \min_{c \in y^*} \bar R^{(K)}(c;x)$.

\subsection{Validity via ensembling}\label{ssec:fusion}

We ask: \emph{does ensembling preserve the $1-\alpha$ coverage guarantee?} Set-level ensembling degrades coverage to $1{-}2\alpha$ \citep{gasparin2024merging}. Instead, we adopt score-level ensembling by taking the mean of several $R_j$, ensuring that they output relevance on a $[0,1]$ scale. Since $\bar R^{(K)}$ gives a fixed conformal score function via \Cref{eq:s}, standard split-conformal theory yields $1-\alpha$ coverage without degradation as long as exchangeability holds. We assume access to a pool of labeled examples $\Pi$ from which ICL examples are drawn for the $R_j$, disjoint from calibration and test sets. For per-sample ICL retrieval, conditioning on $\Pi = \pi$ restores exchangeability, and the law of total expectation lifts the conditional guarantee to an unconditional one (App.~\ref{sec:app-exchangeability}). Hence, our mean ensembling will preserve $1-\alpha$ coverage.

\subsection{Complementarity at $K=2$}\label{ssec:complementarity}

\emph{We ask: under which conditions does ensembling with $K=2$ improve conciseness?} Whether the ensemble floor $S^{(2)}$ improves over the individual floors depends on their values, and on how aligned the scorers' failures are; we capture these with $S_\text{max}, S_\text{min}$ and a new quantity $\mathrm{Comp}$, for \emph{complementarity}, then state the decomposition that combines them.

Concretely, let $S_\text{max} \equiv \max(S(R_1),\, S(R_2))$ and $S_\text{min} \equiv \min(S(R_1),\, S(R_2))$ denote the better and worse individual floors; their difference $S_\text{max} - S_\text{min} \geq 0$ is the \emph{floor gap}.

\begin{definition}[Complementarity]\label{def:complementarity}
For two relevance functions $R_1, R_2$ on input $x$,
\begin{multline}\label{eq:comp-def}
\mathrm{Comp}(R_1, R_2) \;\equiv\; \min_{c \in y^*}\bigl[R_1(c) + R_2(c)\bigr] \\
  - \bigl[S(R_1) + S(R_2)\bigr] \;\geq\; 0.
\end{multline}
\end{definition}

\paragraph{Interpretation.} $\mathrm{Comp}$ measures the extent to which the two relevance functions disagree on the lowest scoring positives $c\in y^*$.  $\mathrm{Comp} = 0$ when $R_1$ and $R_2$ share a lowest scoring positive; $\mathrm{Comp} > 0$ (complementarity) when their minimum is achieved on different positives, except possibly when $R_j$ has degeneracy across the $c_i$.

\begin{restatable}{lemma}{decomposition}
\label{lem:complementarity}
\text{(Ensemble floor advantage)} 
At $K = 2$ and $\beta = 1$,
\begin{equation}\label{eq:floor-decomp}
S^{(2)} \;=\; \tfrac{1}{2}\bigl(S_{\max} + S_{\min} + \mathrm{Comp}\bigr),
\end{equation}
and $S^{(2)} > S_{\max}$ iff $\mathrm{Comp} > S_{\max} - S_{\min}$.
\end{restatable}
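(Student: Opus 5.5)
The proof is a direct unwinding of the definitions; no earlier result beyond the setup of \Cref{ssec:setup} and Definition~\ref{def:complementarity} is needed. First, I would write the ensemble floor at $K=2$ explicitly. By definition,
\[
S^{(2)} = \min_{c\in y^*} \tfrac12\bigl[R_1(c)+R_2(c)\bigr] = \tfrac12 \min_{c\in y^*}\bigl[R_1(c)+R_2(c)\bigr],
\]
since multiplying by the positive constant $\tfrac12$ commutes with the minimum over the finite set $y^*$. Rearranging \eqref{eq:comp-def} gives $\min_{c\in y^*}[R_1(c)+R_2(c)] = S(R_1)+S(R_2)+\mathrm{Comp}(R_1,R_2)$. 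Finally, $\{S(R_1),S(R_2)\}=\{S_{\max},S_{\min}\}$ as a multiset, so $S(R_1)+S(R_2)=S_{\max}+S_{\min}$. Substituting yields \eqref{eq:floor-decomp}.

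For the characterization, I would take \eqref{eq:floor-decomp} and observe that $S^{(2)}>S_{\max}$ holds iff $S_{\max}+S_{\min}+\mathrm{Comp} > 2S_{\max}$, i.e.\ iff $\mathrm{Comp} > S_{\max}-S_{\min}$. Each step is an equivalence obtained by multiplying by $2$ and subtracting $S_{\max}+S_{\min}$, so both directions follow at once.

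For completeness I would also justify the nonnegativity $\mathrm{Comp}\ge 0$ asserted in Definition~\ref{def:complementarity}, although the lemma itself does not need it. For every $c\in y^*$ we have $R_j(c)\ge S(R_j)$, hence $R_1(c)+R_2(c)\ge S(R_1)+S(R_2)$, and taking the minimum over $c$ preserves this inequality. I would also note that $y^*$ is assumed nonempty, so that all the minima are attained.

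There is no real obstacle here; the only point requiring care is the identification $S(R_1)+S(R_2)=S_{\max}+S_{\min}$, which is symmetric in which scorer attains the larger floor.
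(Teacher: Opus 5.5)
Your proposal is correct and follows essentially the same route as the paper. Both write $S^{(2)}=\tfrac12\min_{c\in y^*}[R_1(c)+R_2(c)]$, substitute the definition of $\mathrm{Comp}$, and read off $S^{(2)}-S_{\max}=\tfrac12[\mathrm{Comp}-(S_{\max}-S_{\min})]$; your nonnegativity remark is the paper's appeal to superadditivity of the minimum.
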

\noindent See proof in App.~\ref{sec:app-complementarity}.
\paragraph{Interpretation.} The lemma tells us when a mean ensemble of two scorers has an advantage over both individual components by raising the floor: $\mathrm{Comp}$ must exceed the floor gap. Designing good ensembles thus means searching for scorers whose lowest scoring positives do not align. See App.~\ref{sec:app-comp-worked-example} for a content span-level example showing that the complementarity condition $\mathrm{Comp} > S_{\max} - S_{\min}$ can be achieved in practice.

\subsection{Diminishing returns}\label{ssec:diminishing}

\emph{We ask: when does adding an additional $R_{K+1}$ raise the ensemble floor $S^{(K)}$?} We study $\delta^K(c) \equiv \bar R^{(K)}(c;x) - S^{(K)}$, the margin between $c$'s average relevance score and the ensemble floor, which is non-negative for every $c\in y^*$.
\begin{restatable}{proposition}{diminishing}
\label{prop:diminishing}
\text{($O(\tfrac{1}{K})$ diminishing returns)} \\
We have $S^{(K+1)} > S^{(K)}$ iff $R_{K+1}(c) > S^{(K)} - K\,\delta^K(c)$ for every $c \in y^*$.  When the condition holds,
 \begin{equation}\label{eq:diminishing-bound}
0 < S^{(K+1)} {-} S^{(K)} \leq \frac{1 - S^{(K)}}{K + 1} = O\Big(\frac{1}{K}\Big).
\end{equation}
\end{restatable}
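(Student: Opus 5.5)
The proof rests on one algebraic identity that rewrites the $(K{+}1)$-ensemble mean in terms of the $K$-ensemble floor and the margins $\delta^K(c)$. For every $c\in y^*$,
\begin{equation*}
\bar R^{(K+1)}(c) = \frac{K\,\bar R^{(K)}(c) + R_{K+1}(c)}{K+1} = S^{(K)} + \frac{K\,\delta^K(c) + R_{K+1}(c) - S^{(K)}}{K+1},
\end{equation*}
using $\bar R^{(K)}(c) = S^{(K)} + \delta^K(c)$. Hence
\begin{equation*}
\bar R^{(K+1)}(c) - S^{(K)} = \frac{R_{K+1}(c) - \bigl(S^{(K)} - K\,\delta^K(c)\bigr)}{K+1}.
\end{equation*}

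\textbf{The iff.} Because $S^{(K+1)} = \min_{c\in y^*}\bar R^{(K+1)}(c)$ and $y^*$ is finite (and nonempty, so the minimum is attained), we have $S^{(K+1)} > S^{(K)}$ exactly when $\bar R^{(K+1)}(c) > S^{(K)}$ for every $c\in y^*$. By the identity, this is exactly $R_{K+1}(c) > S^{(K)} - K\,\delta^K(c)$ for all $c\in y^*$. Finiteness matters here, since a strict inequality at each $c$ transfers to the minimum only when the minimum is attained.

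\textbf{The upper bound.} Assume the condition holds, so the lower bound $0 < S^{(K+1)} - S^{(K)}$ is already established. Choose $c^\star\in y^*$ attaining the $K$-ensemble floor, so $\delta^K(c^\star)=0$. Then
\begin{equation*}
S^{(K+1)} \le \bar R^{(K+1)}(c^\star) = S^{(K)} + \frac{R_{K+1}(c^\star) - S^{(K)}}{K+1}.
\end{equation*}
Since every $R_j$ takes values in $[0,1]$ (the normalization assumed in \Cref{ssec:fusion}), $R_{K+1}(c^\star)\le 1$. This gives $S^{(K+1)} - S^{(K)} \le (1-S^{(K)})/(K+1)$, which is $O(1/K)$ because $1 - S^{(K)}\in[0,1]$.

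\textbf{Main obstacle.} There is little real difficulty; the only subtle point is choosing the right witness for the upper bound. Bounding through an arbitrary $c$ would carry the extra term $K\delta^K(c)/(K+1)$, which is not small. Evaluating at a floor-attaining positive, where $\delta^K$ vanishes, removes that term and gives the $1/(K+1)$ factor.
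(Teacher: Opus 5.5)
Your proof is correct and follows the paper's argument essentially step for step. The equivalence comes from rearranging $\bar R^{(K+1)}(c) = (K\bar R^{(K)}(c)+R_{K+1}(c))/(K+1)$ against $S^{(K)}$, and the $(1-S^{(K)})/(K+1)$ bound comes from evaluating at a floor-attaining $c^\star$ with $R_{K+1}(c^\star)\le 1$; you also state explicitly that the minimum over the finite set $y^*$ is attained, which the paper leaves implicit.
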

\noindent See proof in App.~\ref{sec:app-adding}. 
\paragraph{Interpretation.} Adding additional scoring functions to the ensemble can help when the new function $R_{K+1}$ assigns higher relevance to the current lowest scoring spans, i.e. those with $\delta^K(c)=0$. This again emphasizes the need for diversity among the relevance functions. Even when the condition for improvement holds, each additional scorer contributes proportionally less, and the marginal gain at step $K+1$ is at most $(1 - S^{(K)})/(K + 1)$. For small $K$, this can still be a meaningful improvement, but large $K$ is not necessary in practice.

\subsection{Extension to $\beta < 1$}

The theory developed above works in the setting of $\beta=1$, perfect recall of relevant content spans $c\in y^*$. Requiring perfect recall may be overly conservative, and result in larger than desirable prediction sets for a given coverage (\Cref{eq:conformal_importance}). Here we briefly summarize the extension of our theory to $\beta<1$, with full results in Appendix \ref{sec:app-comp-beta}.

To mirror the definition in \Cref{eq:s-beta}, we replace the minimum in $\mathrm{Comp}(R_1, R_2)$ by the $r$th order statistic (\Cref{eq:comp-beta}), which maintains the interpretation of measuring complementarity---how much two scoring functions disagree on the $r$th lowest scoring positive span. Then the score $S_\beta^{(2)}$, the $r$th lowest score from the mean ensemble $\bar R$, improves over $S_{\beta,\max}$ exactly when $\mathrm{Comp}_\beta >S_{\beta,\max} -S_{\beta,\min}$ (\Cref{prop:comp-beta}). In other words, the floor is raised when at least $\lceil \beta |y^* |\rceil$ of the positive spans satisfy $\bar R(c) > S_{\beta, \max}$.

For general $K$, adding an additional scoring function to an existing ensemble will give $S_\beta^{(K+1)}> S_\beta^{(K)}$ when at least $\lceil \beta |y^* |\rceil$ of the positive spans satisfy $R_{K+1}(c) > S^{(K)}_{\beta}-K\delta_\beta^{K}(c)$, where now the margin is $\delta_\beta^K(c) = \bar{R}^{(K)}(c;x) - S_\beta^{(K)}$. In this case, the improvement is bounded above by $(1-S_\beta^{(K)})/(K+1)$ which again shows $O(\tfrac{1}{K})$ diminishing returns (\Cref{prop:adding-beta}).

\vspace{-2pt}
\section{Method - Conformal Relevance}
\label{sec:method}
\vspace{-4pt}

Guided by the theoretical framework of \Cref{sec:theory}, we compose an ensemble of mechanistically diverse ICL strategies into a mean scoring function $\bar R$, with the end-to-end procedure summarized in Algorithm~\ref{alg:pipeline}.  This section defines the data split (\Cref{subsec:split}), the ICL-based relevance functions (\Cref{subsec:scoring}), and the ICL-selection strategies (\Cref{subsec:composition}). The design and hyperparameters for our reference implementation are held fixed across all main results for every dataset and task, with key ablations in \Cref{sec:ablations}.

\begin{algorithm}[t]
\caption{Conformal calibration and prediction for content selection with a $K$-scorer ensemble.}
\label{alg:pipeline}
\small
\begin{algorithmic}[1]
\State \textbf{Inputs:} ICL pool $\Pi$, calibration set $\mathcal{C}$, $K$ ICL selection strategies $\sigma_j$, ICL count $k$, miscoverage $\alpha$, recall target $\beta$
\Statex \textit{Offline calibration on $\mathcal{C}$}
\For{each $x \in \mathcal{C}$}
  \ForAll{$j \in \{1, \ldots, K\}$}
    \State $\mathrm{ICL}_j \leftarrow \sigma_j(\Pi, x)$ \Comment{draw $k$ examples}
    \State $R_j(\cdot; x) \leftarrow \mathrm{LLM}(\cdot; x, \mathrm{ICL}_j)$ \Comment{define scorers}
  \EndFor
  \State $\bar R(\cdot; x) \leftarrow \tfrac{1}{K}\sum_{j=1}^{K} R_j(\cdot; x)$ \Comment{mean ensemble}
  \State compute $S_\beta(x)$ using $\bar R(\cdot; x)$ \hfill\textit{(\Cref{eq:s-beta})}
\EndFor
\State $\hat{q} \leftarrow \lfloor \alpha(n{+}1) \rfloor$-th order statistic of
\Statex\quad $\{S_\beta(x) : x \in \mathcal{C}\}$ \hfill\textit{(recall-style conformal quantile)}
\Statex \textit{Online prediction on test document $x$}
\State Repeat steps~3--7 on a test document $x$ to obtain $\bar R(\cdot; x)$
\State \textbf{Output:} $\{c \in x : \bar R(c; x) \geq \hat{q}\}$
\end{algorithmic}
    \vspace{-4pt}
\end{algorithm}

\subsection{Data Split}
\label{subsec:split}
The datasets we experiment on are shown in \Cref{tab:datasets}. Each dataset is partitioned into three subsets: a \textbf{pool} $\Pi$ supplies ICL examples, a \textbf{calibration set} $\mathcal{C}$ is used to set the conformal threshold $\hat{q}$, and a held-out \textbf{test set} is used for all reported results. We use $|\Pi|{=}50$ for single-intent datasets, or $20$ examples per intent for multi-intent ones, $|\mathcal{C}|=n=100$, and assign the remainder to test. Calibration and test are drawn uniformly at random from the available data so that the exchangeability assumption holds; $\Pi$ is stratified by intent for the multi-intent datasets (PUMA, SubSumE, ContractNLI, Evidence Inference), so each of the $k$ ICL examples per strategy are drawn from the same-intent subpool of $\Pi$ as the test sample. The labeling budget ranges from $150$ to $440$ at maximum (ContractNLI with $17$ intents).

\subsection{ICL-Based Relevance Scoring}
\label{subsec:scoring}

A relevance scoring function $R(c;x)\in[0,1]$ assigns a score to every defined content span $c$ of document $x$ (e.g. each sentence). We instantiate $R$ using LLMs, but not with a manually crafted prompt comprising the definition of what should be considered relevant for a given task (e.g. describing what constitutes important information to summarize), but only with $k$ ICL examples sampled from $\Pi$, each a labeled document with its content spans and binary relevance labels. Avoiding manual prompt engineering means the same scoring function can be applied across content selection tasks, from extractive QA to summarization to PII detection. Details on the construction of the ICL prompts, such as prompt format, windowed compression, optional task hint, and the output schema are elaborated in detail in Appendix \ref{sec:prompting_specification}.

\begin{table*}[t]
\vspace{-2pt}
\caption{Dataset statistics. Full processing details in Appendix~\ref{sec:app-reproducibility}.}
\label{tab:datasets}   
\vspace{-6pt}
\centering
\setlength{\tabcolsep}{2.5pt}
\resizebox{\textwidth}{!}{
\begin{tabular}{lllrrrrrr}
\toprule
Dataset & Domain & Task & Avg \# Spans & Pos. Rate & Intents & \# ICL Pool & \# Cal & \# Test \\
\midrule
ECTSum \citep{mukherjee-etal-2022-ectsum}        & Financial     & Summarization          & 46  & 9.8\%  & ---    & 50  & 100 & 2{,}275  \\
SubSumE \citep{yadav2021subsume}                  & Encyclopedic  & Query-focused summ.    & 461 & 2.5\%  & 10     & 200 & 100 & 1{,}900  \\
PUMA \citep{naik2024no}                           & Medical       & Perspective-based QA   & 14  & 36.0\% & 5      & 100 & 100 & 6{,}082  \\
PhysioNet \citep{PhysioNet-deidentifiedmedicaltext-1.0}      & Medical       & PII detection & 23  & 13.9\% & ---    & 50  & 100 & 637      \\
HotpotQA \citep{yang2018hotpotqa}                 & General       & Multi-hop QA           & 41  & 6.3\%  & ---    & 50  & 100 & 10{,}000 \\
Evidence Inf.\ \citep{deyoung2020evidence}        & Medical       & Evidence extraction    & 155 & 2.0\%  & 8      & 160 & 100 & 9{,}036  \\
ContractNLI \citep{koreeda2021contractnli}        & Legal         & NDA clause scoring     & 84  & 2.7\%  & 17     & 340 & 100 & 5{,}733  \\
\bottomrule
\end{tabular}
}
\vspace{-10pt}
\end{table*}

\subsection{ICL Example Selection Strategies}
\label{subsec:composition}

Building on the potential benefits of scoring ensembles described in \Cref{sec:theory}, our aim is to design diverse relevance scoring functions $R_j$ that can be combined via mean ensembling as $\bar R^{(K)}$, which then goes into the conformal score $S_\beta^{(K)}$ for content selection (\Cref{eq:s-beta}). To realize the benefits in Lemma~\ref{lem:complementarity}, we need relevance functions with mechanistically uncorrelated errors (low scores assigned to relevant spans $c\in y^*$). However, there are diminishing returns when adding many scorers to the ensemble (\Cref{prop:diminishing}), and added costs. We therefore strike a balance by constructing $K=4$ ICL-selection strategies $\sigma_j$ spanning distinct retrieval signals. Each strategy determines how ICL examples are selected from the pool given an input $x$, which then go into a fixed prompt format. Hence, each strategy $\sigma_j$ gives rise to a distinct scoring function $R_j$.
\begin{itemize}[noitemsep,topsep=0pt,leftmargin=*]
    \item \texttt{anchor\_dpp} embeds the query document $x$, finds the document from the ICL pool with highest cosine similarity, and sets that as the anchor. Then $k-1$ other documents are selected from the pool via a DPP conditioned on the anchor.
    \item \texttt{pattern\_dpp} embeds each content span in a pool document, and computes centroid embeddings of the positive ($c\in y^*$) and negative ($c\not\in y^*$) spans. The difference of positive and negative centroids forms a relevance direction vector. We then select $k$ examples via a DPP over the relevance directions.
    \item \texttt{bm25} is lexical top-$k$ retrieval with BM25 \citep{robertson2009bm25}, using $x$ as the query into the index over ICL pool documents;
    \item \texttt{random} selects $k$ documents from $\Pi$ uniformly at random, and serves as an ensemble regularizer.
\end{itemize}
Extended descriptions of the ICL selection strategies are in Appendix~\ref{sec:app-strategies}.

Because every $R_j$ outputs on the same $[0,1]$ scale, the natural ensembling method is the element-wise mean $\bar R(c;x)=\tfrac{1}{K}\sum_j R_j(c;x)$. Our proposed conformal scoring function for arbitrary content selection tasks is \textbf{Ens4}, the mean ensemble of the four ICL-selection strategies mentioned above, using $k{=}2$ ICL examples per scorer (Algorithm~\ref{alg:pipeline}). Validity of the coverage guarantee follows from our argument in \Cref{ssec:fusion}.

\begin{table*}[!t]
\caption{Main MAP results on the seven datasets. \textbf{Bold} marks the best performing setting. $\Delta$ columns report the improvement of Ens4 over baselines with 95\% bootstrap CI in brackets.}
\vspace{-6pt}
\label{tab:main}
\centering
\setlength{\tabcolsep}{2pt}
\resizebox{\textwidth}{!}{
\begin{tabular}{lcclcll}
\toprule
Dataset & ICL0 & Avg Single ($k{=}2$) & Best Single (Config) & Ens4 & $\Delta$ vs ICL0 [95\% CI] & $\Delta$ vs Best Single \\
\midrule
ECTSum
  & 0.350 & 0.392
  & 0.469~(\texttt{anchor\_dpp}, $k{=}3$)
  & \textbf{0.516} & $+0.166_{[+0.150,+0.181]}$ ($+47\%$) & $+0.047_{[+0.032,+0.063]}$ ($+10\%$) \\
Evidence Inf.
  & 0.206 & 0.201
  & 0.213~(\texttt{anchor\_dpp}, $k{=}1$)
  & \textbf{0.304} & $+0.099_{[+0.093,+0.105]}$ ($+48\%$) & $+0.091_{[+0.082,+0.100]}$ ($+43\%$) \\
HotpotQA
  & 0.749 & 0.740
  & 0.749~(\texttt{anchor\_dpp}, $k{=}2$)
  & \textbf{0.839} & $+0.090_{[+0.084,+0.096]}$ ($+12\%$) & $+0.090_{[+0.084,+0.097]}$ ($+12\%$) \\
PhysioNet
  & 0.764 & 0.766
  & 0.795~(\texttt{random}, $k{=}3$)
  & \textbf{0.879} & $+0.115_{[+0.083,+0.147]}$ ($+15\%$) & $+0.080_{[+0.049,+0.112]}$ ($+10\%$) \\
PUMA
  & 0.648 & 0.752
  & 0.777~(\texttt{anchor\_dpp}, $k{=}2$)
  & \textbf{0.814} & $+0.165_{[+0.156,+0.175]}$ ($+25\%$) & $+0.036_{[+0.027,+0.045]}$ ($+5\%$) \\
SubSumE
  & 0.289 & 0.354
  & 0.373~(\texttt{bm25}, $k{=}3$)
  & \textbf{0.464} & $+0.170_{[+0.153,+0.187]}$ ($+59\%$) & $+0.089_{[+0.072,+0.107]}$ ($+24\%$) \\
ContractNLI
  & 0.718 & 0.711
  & 0.733~(\texttt{pattern\_dpp}, $k{=}3$)
  & \textbf{0.828} & $+0.109_{[+0.099,+0.120]}$ ($+15\%$) & $+0.095_{[+0.085,+0.105]}$ ($+13\%$) \\
\bottomrule
\end{tabular}
}
\vspace{-10pt}
\end{table*}

\begin{table*}[t]
\vspace{-10pt}
\caption{Metric reported is MAP. Left: Ablation of ICL examples per strategy ($k$) in Ens4. Right: Ablation of ensemble size ($K$) in Ens$K$. For $K=2$ and $K=3$ we show the single best combination of all 4 ICL strategies over every combination.}
\label{tab:ablation-k-K}
\vspace{-6pt}
\renewcommand{\arraystretch}{0.95}
\centering
\resizebox{0.85\textwidth}{!}{
\begin{tabular}{lcccccccc}
\toprule
& \multicolumn{5}{c}{\emph{$k$ ablation (K=4 fixed)}}
& \multicolumn{3}{c}{\emph{Oracle-best ensemble size ($k{=}2$ fixed)}} \\
\cmidrule(lr){2-6} \cmidrule(lr){7-9}
Dataset & $k{=}1$ & $k{=}2$ & $k{=}3$ & $k{=}5$ & $k{=}8$
        & \multicolumn{1}{c}{Best Ens2} & \multicolumn{1}{c}{Best Ens3} & \multicolumn{1}{c}{Ens4} \\
\midrule
ECTSum          & 0.466 & 0.516          & 0.507 & \textbf{0.518} & 0.517
                & 0.492 & 0.509 & \textbf{0.516}  \\
Evidence Inf.   & \textbf{0.307} & 0.304 & 0.303 & 0.300 & 0.293
                & 0.261 & 0.288 & \textbf{0.304}  \\
HotpotQA        & 0.832 & \textbf{0.839} & 0.836 & \textbf{0.839} & 0.835
                & 0.812 & 0.832 & \textbf{0.839} \\
PhysioNet       & 0.864 & \textbf{0.879} & 0.874 & 0.869 & 0.861
                & 0.856 & 0.874 & \textbf{0.879} \\
PUMA            & 0.812 & \textbf{0.814} & 0.813 & 0.810 & 0.806
                & 0.800 & 0.808 & \textbf{0.814} \\
SubSumE         & 0.439 & \textbf{0.464} & 0.458 & 0.460 & 0.460
                & 0.427 & 0.452 & \textbf{0.464} \\
ContractNLI     & 0.821 & 0.828          & \textbf{0.833} & 0.832 & 0.820
                & 0.787 & 0.816 & \textbf{0.828} \\
\bottomrule
\end{tabular}
}
\renewcommand{\arraystretch}{1.0}
\vspace{-6pt}
\end{table*}

\vspace{-4pt}
\section{Experiments}
\vspace{-3pt}
\label{sec:experiments}

\vspace{-2pt}
\subsection{Experimental Setup}
\label{sec:setup}
\vspace{-2pt}
Code is available at \href{https://github.com/layer6ai-labs/conformal-relevance}{github.com/layer6ai-labs/conformal-relevance}. We evaluate on seven sentence-level relevance datasets spanning five domains (financial, encyclopedic, medical, general QA, and legal), four task types (summarization, question answering, entity/span detection, and clause scoring), and document lengths from 14 to 461 sentences (Table~\ref{tab:datasets}). Four datasets (HotpotQA, ECTSum, ContractNLI, Evidence Inference) were reformulated to sentence-level binary relevance with per-dataset construction details in App.~\ref{sec:app-datasets}.

\paragraph{Metrics.}
We report two scoring-function quality measures. \emph{Mean Average Precision} (MAP), the mean of per-sample AP across the test set, ranks the scoring function across the full score distribution and is the primary metric in Sections \ref{sec:main-results} and \ref{sec:ablations}. \emph{Conciseness}, $\mathbb{E}_{x \sim \text{test},\ \text{MC splits}}\!\left[1 - \lvert C_{\hat q}(x)\rvert/\lvert x\rvert\right]$, is the expected fraction of sentences removed by the calibrated prediction set $C_{\hat q}(x)$ at fixed $(\alpha, \beta)$, with the expectation taken over the test set. To account for variance in the conformal algorithm due to dataset splitting, we repeat the experiments with 400 random calibration/test splits and average metrics over these runs. Higher is better for both MAP and conciseness. We further examine the $S_\beta$ order statistic underlying the calibrated threshold $\hat q$ on a per-sample level in \Cref{sec:ablations}. Empirical coverage at target level $1{-}\alpha$ is reported in \Cref{sec:conformal-results} as a validity check on the theoretical guarantees which must hold for any fixed scoring function under exchangeability.

\paragraph{Baselines.}
\textbf{ICL0}: a manually crafted per-dataset prompt that thoroughly describes what should be considered relevant for that task, with no ICL examples (full prompts in App.~\ref{sec:app-prompts}). \textbf{Best Single}: a post-hoc oracle, the single highest MAP strategy configuration of the four ICL selection strategies, selected per-dataset with tuned hyperparameter $k$. The \emph{Best Single (Config)} column in Table~\ref{tab:main} reports the winning ($\sigma_j$, $k$) pair, and \emph{Avg Single} reports the mean of the four strategies MAPs at $k{=}2$.

\paragraph{Default configuration.}
Unless stated otherwise, all \textbf{Ens4} experiments use $K{=}4$ scoring functions in the mean ensemble, created from the four ICL selection strategies (\texttt{anchor\_dpp}, \texttt{pattern\_dpp}, \texttt{bm25}, \texttt{random}), $k{=}2$ ICL examples per strategy, Gemini-2.5-Flash-Lite \citep{google2025gemini25flash} as the scoring LLM, with fixed seeds. Single-strategy baselines use the same model and seeds. We sweep over the hyperparameter $k$ for values $\{1, 2, 3, 5, 8\}$ with each strategy, both to find what value drives the Best Single oracle, and for the $k$-ablation of \Cref{sec:ablations}.

\subsection{Main Results}
\label{sec:main-results}
\vspace{-4pt}
Table~\ref{tab:main} displays our main results for MAP across all seven datasets, comparing Ens4 to the two baseline strategies and average performance. We emphasize that a single Ens4 configuration is used for all tests---only the datasets change. Ens4 improves MAP on all seven datasets, beating both ICL0 and the per-dataset best single strategy. To make the comparisons more explicit, we show the amount of improvement ($\Delta$) of Ens4 over ICL0 and Best Single, with 95\% confidence intervals computed over 10,000 bootstrap samples of the 400 measurements comprising each average value in the table. Improvements over ICL0 range from $+0.090$ to $+0.170$ MAP ($+12\%$ to $+59\%$), with all confidence intervals excluding zero. This shows that ICL examples can capture the meaning of relevance across a variety of tasks better than a manually crafted description. Additional visualizations of the data are shown in App.~\ref{sec:app-strategy-matrix}.

Several other patterns stand out from the data.
\textbf{Complementarity.}\ For HotpotQA no single ICL strategy beats ICL0, yet Ens4 improves $+0.090$ MAP. Additionally, even though the $k=2$ single strategies are weaker on average (Avg Single) than the Best Single configuration with $k$ tuned, Ens4 using the $k=2$ version of the strategies still outperforms. These observations demonstrate that ensembling over strategies of similar strength can lead to improvements due to complementarity.\\
\textbf{Winner heterogeneity.}\ No single sub-strategy dominates across datasets (Best Single column). Each of the four selection strategies is the best for at least one dataset, including the \texttt{random} strategy. This demonstrates that it can be difficult to choose a single ICL selection strategy that works broadly, whereas our Ens4 ensemble is consistently strong.\\
\textbf{Statistical significance and robustness.}\ The bootstrap CIs in Table~\ref{tab:main} lower-bound the improvement over ICL0 strictly above zero on every dataset. Because the seven benchmarks share a scoring LLM, prompt template, and seeds, we treat the 7/7 directional pattern as a consistency check rather than as seven independent trials.

Ens4 comes with higher computational burden than ICL0, using four calls to an LLM scorer instead of one, with negligible ICL-retrieval overhead. We analyze structural and empirical computational costs in App.~\ref{sec:app-strategy-matrix}.

\subsection{Ablations and Validation}\label{sec:ablations}
We ablate key hyperparameters ($k$, $K$), validate the theory, isolate retrieval-induced diversity, and report framework robustness.
\vspace{-4pt}
\paragraph{ICL examples per strategy ($k$).}
Existing research on general NLP tasks has reported inconsistent findings on whether additional ICL examples are helpful or harmful \citep{chen2023many, zou2025many}. The optimal number of ICL examples appears to be task-dependent, so we test here for conformal content selection tasks. In \Cref{tab:ablation-k-K} (left) we vary $k \in \{1, 2, 3, 5, 8\}$ with $K{=}4$ fixed. $k{=}1$ underperforms $k{=}2$ on 6/7 datasets, likely due to insufficient guidance for pattern recognition. Higher values ($k \geq 5$) degrade performance on most datasets, which could be from DPP-kernel saturation and prompt-length effects starting to dilute instructions to the model. Instead, small values of $k \in \{2, 3\}$ show the most consistent and strong performance. We use $k{=}2$ in the default Ens4 since it is the more efficient of the two similar options.

\begin{figure*}[t]
\centering
\begin{subfigure}[t]{0.49\linewidth}
  \centering
  \includegraphics[width=\linewidth, height=4.8cm, keepaspectratio]{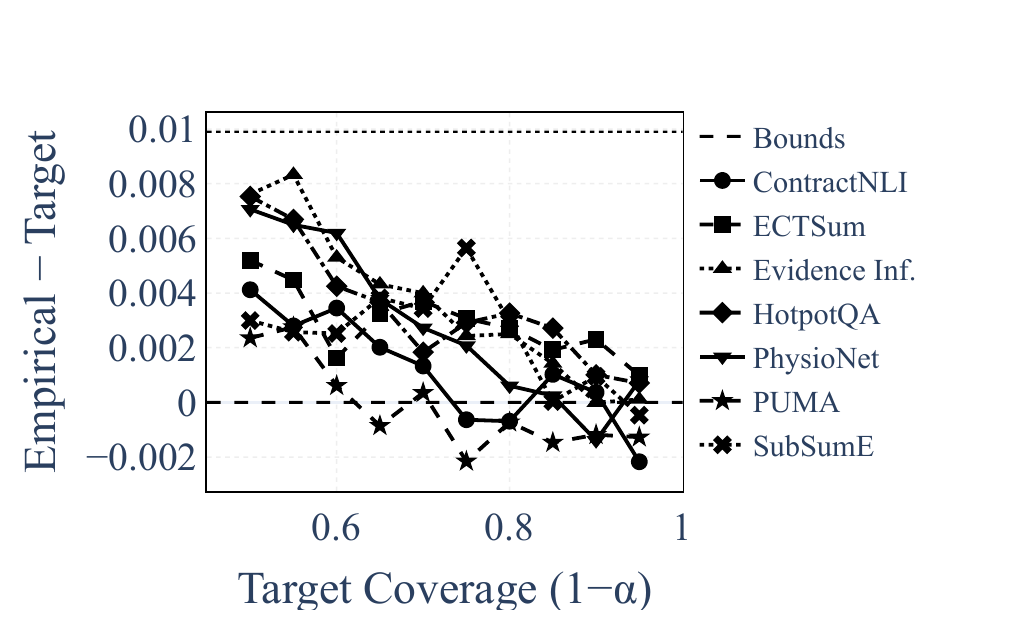}
  \caption{Empirical minus target coverage vs.\ $1{-}\alpha$. All datapoints remain within 1\,pp. of the target. Dashed: bounds from Theorem~1 of \citet{kuwahara2025conformal}.}
  \label{fig:coverage-diff}
\end{subfigure}\hfill
\begin{subfigure}[t]{0.49\linewidth}
  \centering
  \includegraphics[width=\linewidth, height=4.8cm, keepaspectratio]{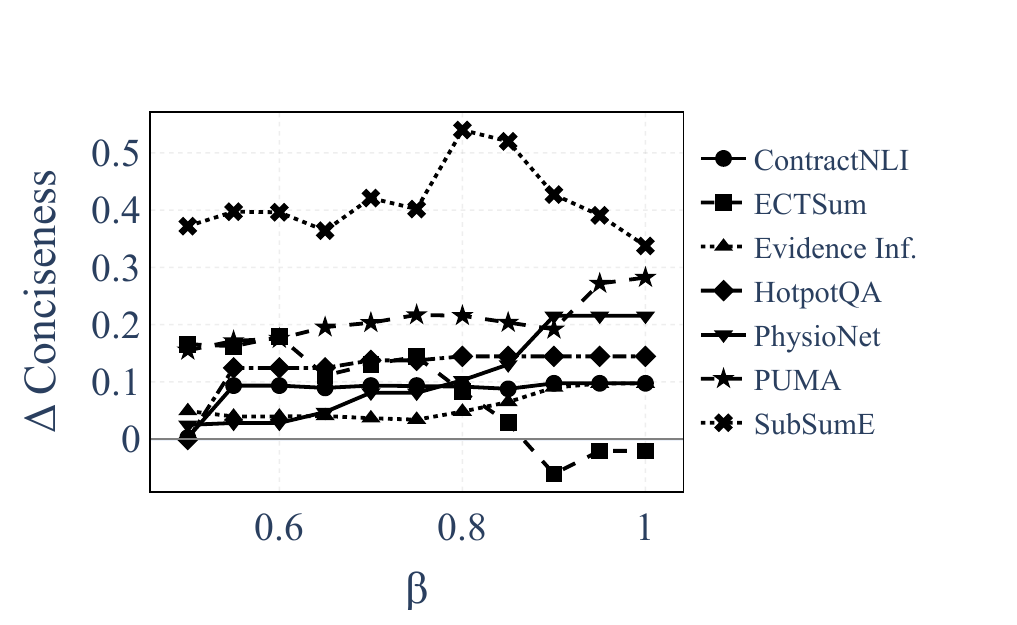}
  \caption{Conciseness gain $\Delta{=}\text{Ens4}{-}\text{ICL0}$ vs.\ $\beta$ at
    $\alpha{=}0.20$.}
  \label{fig:conciseness}
\end{subfigure}
\vspace{-6pt}
\caption{Conformal coverage validity (left) and Ens4--ICL0 conciseness gain
  (right) across all 7 datasets ($n{=}100$).}
\label{fig:coverage-combined}
\vspace{-10pt}
\end{figure*}

\vspace{-4pt}
\paragraph{Ensemble size ($K$) and composition.}
With $k=2$ fixed, we run Ens$K$ across all non-singleton combinations of the four ICL selection strategies (11 total configurations). Results are shown in \Cref{tab:ablation-k-K} (right). Our default Ens4 performs best on every dataset, showing that ensembling more scoring functions, even potentially weak ones like \texttt{random}, remains beneficial due to complementarity gains. However, marginal gains are strictly decreasing across all 7 datasets going from $K=3\to 4$ compared to $2\to 3$, consistent with \Cref{prop:diminishing}'s $O(\tfrac{1}{K})$ bound and diminishing returns. See  App.~\ref{sec:app-composition} for further breakdown.
\paragraph{Theory validation.}
App.~\ref{sec:app-theory-validation} demonstrates empirical checks on the conditions and assumptions made throughout \Cref{sec:theory}. Notably, we check how often the advantage condition from \Cref{lem:complementarity} is satisfied, that $\mathrm{Comp} > S_{\max} - S_{\min}$, and how often this co-occurs with a measured advantage (see \Cref{tab:app-prop1-agreement}). When the condition is met, advantage is observed over 99\% of the time.
\vspace{-2pt}
\paragraph{ICL complementarity.}
We test the importance of complementarity compared to other ICL factors: quantity, supervision, and increased compute.

\emph{Complementarity vs.\ ICL quantity.} Ens4 at $k{=}2$ deploys $K \cdot k = 8$ ICL examples for each query $x$, spread out over 4 scoring functions. This can be directly compared to Best Single at $k{=}8$, using the entire ICL budget in a single retrieval signal (App.~\ref{sec:app-quantity}, Table~\ref{tab:app-k-scaling-strategies}). Still, Ens4 outperforms by $+0.070$ to $+0.119$ MAP on every dataset.

\emph{Complementarity vs.\ labeled data.}
To test whether Ens4's gains can be explained by access to additional labeled examples, we train a supervised classifier using the available ICL pool and calibration data, which replaces the LLM-based ICL mechanism for scoring. Ens4 outperforms the classifier on 6/7 datasets (App.~\ref{sec:app-classifier}, \Cref{tab:app-classifier}).

\emph{Complementarity vs.\ stochasticity and compute.}
We replace retrieval diversity with stochastic diversity by ensembling scoring functions with different temperature values $T \in \{0.3, 0.5, 0.8, 1.0\}$ for a single ICL selection strategy. These approaches have matched LLM compute budgets, isolating the type of diversity in the scoring functions. Results in App.~\ref{sec:app-equal-compute}, Table~\ref{tab:app-equal-compute-ci} show that Ens4 outperforms the stochastic-diversity baseline in 12/16 settings, with overlapping CIs in the other 4. Individual temperature runs show relatively small MAP differences across $T$ (Table~\ref{tab:app-equal-compute-spread}). We additionally ensemble four temperature-varied ICL0 calls. This baseline directly matching Ens4's LLM call count, but without the ICL examples or retrieval complementarity. Ens4 significantly outperforms this baseline on 6/7 datasets, with a statistical tie on ContractNLI (Table~\ref{tab:app-icl0-equal-compute}).

Together these controls suggest retrieval-induced complementarity (Lemma~\ref{lem:complementarity}) as the operative mechanism for improved MAP of the ensembled conformal score function.

\paragraph{Robustness.} We vary other design choices of our method to verify robustness.

\emph{LLM:} We replace Gemini-2.5-Flash-Lite with Llama3-8B and Qwen3-8B on 4 representative datasets. Ens4 outperforms ICL0 on 7/8 (model, dataset) combinations and Best Single on 8/8. We further evaluate GPT-5.6-terra on seeded 500-sample test subsets of all seven datasets, where Ens4 outperforms both baselines on 7/7 datasets. Results are shown in App.~\ref{sec:app-cross-model}, Table~\ref{tab:app-cross-model}.

\emph{Task hint:} The default Ens4 configuration includes in the scoring model prompt a one-line specification of the task. We test whether ICL examples are sufficient to describe the task on their own by removing the hint altogether, making the configuration and prompts completely dataset-agnostic. Results are shown in App.~\ref{sec:app-task-hint}, \Cref{tab:hint-ablation}. 6/7 datasets stay within $0.04$ MAP when the hint is removed. Only PhysioNet, which tests non-semantic personal health information, is materially affected, collapsing from $0.879 \to 0.549$ MAP.

\subsection{Conformal Coverage and Conciseness}
\label{sec:conformal-results}
The preceding results focused on MAP, which is a measure of the accuracy of the conformal score function in isolation. We now evaluate the entire conformal pipeline which produces sets of content spans with coverage guarantees. First, we sweep over $\alpha \in \{0.05, \ldots, 0.50\}$, with fixed $\beta{=}0.8$ to calibrate $\hat q$ on the calibration set, and use it to generate prediction sets on the test data. This process is repeated 400 times with random splits of calibration and test data (while the ICL pool remains fixed), with results averaged. quality measure.

\paragraph{Coverage.} \Cref{fig:coverage-diff} shows the average empirical coverage of prediction sets produced with the Ens4 conformal score as $1-\alpha$ is varied. The dashed upper and lower bounds reflect the coverage guarantee of the algorithm we employ \citep{kuwahara2025conformal}. Empirical coverage remains within 1\,pp of the target $1{-}\alpha$ coverage for all datasets (Figure~\ref{fig:coverage-diff}). The coverage random variable has variance that depends on calibration set size, but even though we use a modest 100 labeled samples, we find empirical coverage is consistently close to the target value.

\paragraph{Conciseness.} \Cref{fig:conciseness} measures conciseness - the fraction of sentences filtered out of the prediction set. Given a fixed coverage level, smaller prediction sets are more useful, and reflect a more powerful and confident scoring function. Compared to ICL0, Ens4 removes more content at $1-\alpha =0.8$ for all datasets at almost all tested values of $\beta$. This shows that the MAP gains of Ens4 do translate into more useful prediction sets. We perform a larger sweep over $\alpha$ values in App.~\ref{sec:app-alpha-sweep}.

\paragraph{Alternative conformal ensemble rules.}
We additionally compare mean ensembling with alternative conformal ensemble rules, using the same four Ens4 constituent scoring functions in each case. Mean ensembling stays closest to the target coverage while achieving the highest conciseness. Majority voting under-covers at $\alpha{=}0.2$, whereas more conservative rules over-cover and produce less concise prediction sets (App.~\ref{sec:app-conformal-ensembles}, \Cref{tab:conformal-ensemble-rules}).

\section{Conclusion}\label{sec:conclusion}

Many NLP tasks share a common core objective --- extract relevant content from a document. The definition of relevance differs from one task to the next. In this work we introduced the \emph{Conformal Relevance} framework for providing coverage guarantees over extractive content selection tasks with minimal manual input, instead relying on in-context learning to define relevance implicitly for any given task. Compared to past work, we replace task-specific scoring functions (an LLM with a hand-crafted prompt) by a mean ensemble of four ICL example selection strategies, and find that a single configuration outperforms bespoke prompts on seven datasets spanning five domains. Consequently, for a range of coverage and recall levels, Conformal Relevance filters out substantially more irrelevant content. We presented theoretical arguments to capture when a mean ensemble should improve, and the limitations for how much improvement can be gained by adding addition scorers. Empirically, we studied the importance of complementarity amongst the ICL selection strategies, and ablated this against quantity and stochasticity controls.

We pose three extensions for future work: replace the fixed mean aggregator with an adaptive scheme similar to \citep{angelopoulos2022learn}, which would require an additional labeled data subset; extend binary relevance to graded labels (tiered or continuous) via conformal risk control \citep{angelopoulos2024conformal}; or to expand scope from purely extractive content selection to general generative tasks \cite{loaiza2026conf}.

\section*{Limitations}
\textbf{AI Use Disclosure:} We used coding and research agents to support the theoretical discussions and experimental verification of this work. We used LLM tools to assist with writing.

The Conformal Relevance framework assumes access to ${\sim}150$--$440$ labeled examples (the ICL pool plus 100 sample calibration set) which may need to be manually curated in practice; a static relevance criterion that does not drift between calibration and test to ensure exchangeability; and a one-line task hint which is manually crafted to explain what relevance means for a dataset, although this is optional and our method was largely unchanged on 6/7 datasets without it.

Our main results were reported on Gemini-2.5-Flash-Lite, with ablations on two 8B open-source LLMs. This is a limited set of models that have been surpassed in strength since this work was performed.

Our Ens4 ensemble of scoring functions requires $K=4$ LLM calls per document, compared to one for the simpler ICL0 baseline. This is an example of test-time compute, where spending additional computation with LLMs can improve performance, but of course increases costs.

All experiments fix temperature $T{=}0$, seed, model version, and prompt templates, however there is residual variation from the hosted LLM's server-side execution which is outside our control and can shift individual sample scores between re-runs. Replication should target aggregate MAP rather than per-sample scores.

The $1{-}\alpha$ guarantee for Conformal Relevance is marginal over the test distribution. Conditional coverage, for example over intents, document-length strata, or protected attributes, is not implied. Methods such as Mondrian conformal prediction~\citep{vovk2003mondrian} can be used when conditional coverage is required.

The ICL selection strategies select examples from the labeled pool, but on datasets where positives are very sparse, the effective number of positive ICL candidates may be small, which may prevent the LLM from seeing sufficiently informative context. Larger or more carefully constructed pools may narrow this gap, at proportionally higher labeling cost.

We do not foresee any particular societal risks from use of our methods.

\bibliography{bib}

\clearpage
\appendix
\section*{Appendix Contents}

\noindent\textit{Related Work (referenced by \Cref{sec:background})}
\begin{itemize}[leftmargin=2.2em,itemsep=1pt,topsep=2pt,parsep=0pt]
  \item[\ref{sec:app-related}] Additional Related Work
\end{itemize}
\noindent\textit{Theory (referenced by \Cref{sec:theory})}
\begin{itemize}[leftmargin=2.2em,itemsep=1pt,topsep=2pt,parsep=0pt]
  \item[\ref{sec:app-exchangeability}] Detailed Exchangeability Argument
  \item[\ref{sec:app-proofs}] Ensemble Theory — Full Proofs and Additional Results
\end{itemize}

\noindent\textit{Method (referenced by \Cref{sec:method})}
\begin{itemize}[leftmargin=2.2em,itemsep=1pt,topsep=2pt,parsep=0pt]
  \item[\ref{sec:app-reproducibility}] Reproducibility Details
  \item[\ref{sec:prompting_specification}] Prompting and ICL Specification
  \item[\ref{sec:app-strategies}] ICL Selection Strategy Details
  \item[\ref{sec:app-prompts}] ICL0 Prompts
\end{itemize}

\noindent\textit{Experiments (referenced by \Cref{sec:experiments})}
\begin{itemize}[leftmargin=2.2em,itemsep=1pt,topsep=2pt,parsep=0pt]
  \item[\ref{sec:app-strategy-matrix}] Main Empirical Results
  \item[\ref{sec:app-robustness}] Robustness Evaluations
  \item[\ref{sec:app-theory-validation}] Empirical Validation of Ensemble Theory\item[\ref{sec:app-alpha-sweep}] Conformal Conciseness Across $\alpha$
\end{itemize}

\ \\ 
\ \\

\section*{Notation Used in the Appendix}
\label{sec:app-notation}

\footnotesize
\begin{tabular}{@{}lll@{}}
\toprule
Symbol & Meaning & First defined \\
\midrule
\multicolumn{3}{l}{\emph{Span and document level}} \\
$x$, $c$              & Document, content span & §\ref{sec:background} \\
$y^*$                 & Relevant (positive) sentence set & §\ref{sec:background} \\
$R_j(c; x)$           & $j$-th relevance function, $\in [0,1]$ & §\ref{sec:background} \\
\midrule
\multicolumn{3}{l}{\emph{Conformal calibration}} \\
$\alpha$, $\beta$     & Coverage miscoverage budget; recall target & §\ref{sec:background} \\
$r$                   & Order-statistic index: $r = |y^*| - \lceil \beta |y^*| \rceil + 1$ & §\ref{sec:background} \\
$\hat{q}$             & Calibrated conformal threshold & §\ref{sec:background} \\
$\Pi$                 & Pool of ICL examples & \S\ref{sec:theory} \\
\midrule
\multicolumn{3}{l}{\emph{Ensemble level}} \\
$K$, $k$              & Number of scoring functions $R_j$; ICL examples per strategy & §\ref{sec:background} \\
$\bar{R}^{(K)}$ & Mean ensemble of scoring functions; $\bar{R}^{(K)} = \tfrac{1}{K}\sum_j R_j$ & §\ref{sec:theory} \\
$g$                   & Score aggregation function (we fix $g = \mathrm{mean}$) & §\ref{sec:theory} \\
$S_\beta$, $S_\beta^{(K)}$       & Conformal score using scoring function $R_j$, $\bar R^{(K)}$ & §\ref{sec:theory} \\
$\delta^K(c)$, $\delta_\beta^K(c)$         & Per-sentence margin above floor: $\bar{R}^{(K)}(c) - S^{(K)}$, etc. & §\ref{sec:theory} \\
$S_{\max}$, $S_{\min}$ & $\max$, $\min$ of $S$ across the $K{=}2$ pair & §\ref{sec:theory} \\
$\mathrm{Comp}$       & Complementarity measure & §\ref{sec:theory} \\
$A_j$                 & Set of minimizers for $R_j$: $\{c \in y^* : R_j(c) = S(R_j)\}$ & App~\ref{sec:app-complementarity} \\
\bottomrule
\end{tabular}
\normalsize

\clearpage
\section{Additional Related Work}
\label{sec:app-related}

There are two major approaches to ensembling applied to conformal prediction: set-level and score-level. In set-level prediction, multiple scoring functions each run the conformal algorithm to produce prediction sets that individually come with conformal coverage guarantees. Then the sets are merged in some way, maintaining an overall guarantee on coverage, which may not be as tight as the individual guarantees, but with the objective of reducing final set size. Methods in this direction include multi-split conformal prediction \citep{SOLARI2022109395}, majority vote merging \cite{gasparin2024merging}, conformal online model aggregation \citep{gasparin2024conformal}, and confidence-level allocation \citep{xu2025aggregating}.

Our work is more closely related to past works on score-level aggregation. \citet{ochoa2025conformal} discuss aggregating multiple scoring functions into a single conformal score, and introduce a particular method for conformal score aggregation (CSA) on classification problems. They establish the central structural observation that aggregating scores preserves more information than aggregating prediction sets. Our contribution is the formalization of the recall-style coverage guarantee setting which is more applicable to NLP tasks. Other relevant score-level works include conformal score averaging \cite{luo2025conformal}, symmetric aggregation \citep{alami2026symmetric}, efficiency/validity first conformal prediction \citep{yang2025selection}, and conformal aggregation of one-shot predictors~\citep{waldron2026caos}.

Conformal prediction can be seen as a type of uncertainty quantification. By applying it, users gain more information about how confident their model's predictions are, as larger sets indicate greater uncertainty \cite{angelopoulos2023conformal}. However, when applying conformal prediction in practice, one must be aware of potential negative interactions with other aspects of trust \citep{cresswell2025trustworthy}. In particular, researchers have considered whether conformal prediction can impact fairness. When datapoints are stratified by group identities, \citet{romano2020with} proposed that ensuring each group receives prediction sets with equal coverage should be a fairness goal \cite{zhou2024conformal}. This is a type of procedural fairness, since it relates to properties of the algorithm itself. However, \citet{gao2026burden} found that there can be inherent tradeoffs between fairness and coverage disparity.

In contrast to the above, \citep{cresswell2024, cresswell2025conformal} proposed that fairness should consider the downstream impacts of conformal prediction, that is, how sets are used, not just how they are produced. They found that equalizing set size, not coverage, resulted in the most fair outcomes on downstream tasks. This was further confirmed in larger scale experiments \citep{liu2026beyond}.

Other works have have borrowed traditional notions of fairness in machine learning to apply them to conformal prediction, such as demographic parity \citep{liu2022fairnessCQR}, Equal Opportunity \citep{wang2024equal}, while \citet{vadlamani2025a} create a framework that supports many definitions. Counterfactual fairness \cite{kusner2017counterfactual} has also been applied in a conformal setting \citep{guldogan2025counterfactual}. Practitioners should thus be aware to evaluate their deployments of conformal prediction for fairness impacts, and employ suitable methods to improve fairness where possible.

\section{Detailed Exchangeability Argument}
\label{sec:app-exchangeability}

This appendix extends the validity argument in Section~\ref{ssec:fusion} for greater clarity.

For fixed-ICL scorers, ICL examples are drawn from the pool $\Pi$, which is disjoint from the calibration and test sets.  Once drawn and added to the LLM's prompt, each relevance scoring function $R_j(c; x)$ is a deterministic function of the input, so exchangeability of calibration and test samples directly implies exchangeability of conformal scores. The ensemble case is immediate: $\bar R(c; x) = \tfrac{1}{K}\sum_j R_j(c; x)$ is also a deterministic function of the input once all $K$ ICL example sets are fixed, and the conformal guarantee applies to any fixed scoring function.

\subsection{Per-Sample ICL Scorers}

For retrieval-based ICL selection (e.g., BM25), the ICL examples depend on the input $x$ via the selection function $f(x, \text{pool})$.  Conditioning on the fixed pool, this is still a deterministic function of $x$, and exchangeability of calibration and test data is preserved.

\textbf{Formal argument (tower property).}  Let $\Pi$ denote the pool draw. For any fixed realization $\pi$ of $\Pi$, the relevance scoring function $R_j(\cdot; \cdot \mid \Pi = \pi)$ is a deterministic function, so the conformal guarantee holds conditionally:
\begin{equation}
  P\bigl[S_\beta \leq \hat{q} \mid \Pi = \pi\bigr]
  \leq \alpha
  \quad \text{for all } \pi.
\end{equation}
By the tower property:
\begin{equation}
  P\bigl[S_\beta \leq \hat{q}\bigr]
  = \mathbb{E}_\Pi\bigl[P[S_\beta \leq \hat{q} \mid \Pi]\bigr]
  \leq \alpha,
\end{equation}
so the guarantee holds unconditionally.

\subsection{What Would Break Exchangeability}

Two scenarios would break the argument:
\begin{enumerate}[leftmargin=*]
  \item \textbf{Data leakage:} drawing ICL examples from the calibration or test set rather than the disjoint pool, which couples the scoring function to the calibration data.
  \item \textbf{Data-driven aggregation:} letting the aggregation function depend on calibration data, introducing drift in the conformal score distribution between calibration and test.
\end{enumerate}
The three-way split (pool drawn first, calibration and test drawn independently) prevents scenario~1. The fixed aggregation prevents scenario~2.

This is precisely \citet{ochoa2025conformal}'s concern: their data-driven quantile envelopes require a two-stage calibration split to restore exchangeability, which our fixed mean ensembling avoids.

\section{Ensemble Theory --- Full Proofs and Additional Results}
\label{sec:app-proofs}

This appendix contains full proofs of all theoretical results stated in Section~\ref{sec:theory}, along with additional results: the ensemble floor inequality, the $\beta < 1$ extension (generalized $\mathrm{Comp}_\beta$ decomposition, sacrifice fragmentation characterization, and the $\beta$ dichotomy), set containment bounds, conciseness guarantees, and variance reduction.

\subsection{Setup and Notation}
\label{sec:app-proofs-notation}

We first recall the notation from \Cref{sec:theory} for convenience. We have $K$ scoring functions $R_1, \ldots, R_K$, each assigning a relevance score in $[0, 1]$ to every content span $c$. The mean ensemble scoring function averages them per-span:
\begin{equation}
  \bar R(c; x)
  = \frac{1}{K} \sum_{j=1}^{K} R_j(c; x).
\end{equation}
The positive set $y^*$ contains $|y^*|$ ground truth relevant content spans. For a given scoring function (either one of the $R_j$ or $\bar R^{(K)}$) and a given $x$, sort the scores for positive spans in ascending order: $R_{(1)} \leq R_{(2)} \leq \cdots \leq R_{(|y^*|)}$.  Then: 
\begin{equation*}
  S_\beta(x, y^*) = R_{(r)} \quad \text{where } r = |y^*| - \lceil \beta |y^*| \rceil + 1.
\end{equation*}
At $\beta = 1$: $r = 1$, so $S(x, y^*) = \min_{c \in y^*} R(c;x)$.

\subsection{Ensemble floor advantage proof}
\label{sec:app-complementarity}

We present the full proof of Lemma~\ref{lem:complementarity} from Section~\ref{ssec:complementarity}.

\smallskip
\decomposition*
\smallskip

\begin{proof}
For convenience, we recall the definition of complementarity from \Cref{def:complementarity},
\begin{multline*}
  \mathrm{Comp}(R_1, R_2)
  = \min_{c \in y^*}\bigl[R_1(c) + R_2(c)\bigr] \\
  - \bigl[S(R_1) + S(R_2)\bigr].
\end{multline*}
We note that non-negativity of $\mathrm{Comp}$ follows from superadditivity of the minimum. The mean ensemble's conformal score is
\begin{equation}
  S^{(2)}
     = \frac{1}{2}\min_{c \in y^*}\bigl[R_1(c) + R_2(c)\bigr].
\end{equation}
Substituting the definition of $\mathrm{Comp}$ directly gives
\begin{align}
  S^{(2)}
  &= \frac{1}{2}[S(R_1) + S(R_2) + \mathrm{Comp}]\nonumber\\
  &= \frac{1}{2}[S_{\max} + S_{\min} + \mathrm{Comp}].
\end{align}
The mean ensemble has an advantage when $S^{(2)} > S_\text{max}$, which occurs exactly when
\begin{equation}
 S^{(2)} - S_{\max} = \frac{1}{2}[\mathrm{Comp} - (S_{\max} - S_{\min})] > 0,
\end{equation}
or $\mathrm{Comp} > S_{\max} - S_{\min}$.
\end{proof}

\begin{remark}[Disjoint minimizers]
Let $A_j = \{c \in y^* : R_j(c) = \min_{c'} R_j(c')\}$ be the set of positive content spans with minimal score according to $R_j$.  If $\bigcap_j A_j = \emptyset$ (no content span is a minimizer for all $R_j$ simultaneously), then $S(\bar R) > \frac{1}{K}\sum_j S(R_j)$ strictly.  This is because for every $c$, since $c \notin \bigcap_j A_j$, there exists some scoring function $R_j$ with $R_j(c) > A_j$ strictly, giving $\bar{R}(c) > \frac{1}{K}\sum_j R_j(a)$ for all $a\in A_j$.
\end{remark}

\paragraph{Worked example.}\label{sec:app-comp-worked-example}
Consider $K = 2$ and $p = 3$ relevant content spans:
\smallskip
\begin{center}
\begin{tabular}{@{}cccc@{}}
  \toprule
  Content span & $R_1$ & $R_2$ & $\bar{R}$ \\
  \midrule
  $c_1$ & 0.8 & 0.6 & 0.70 \\
  $c_2$ & 0.4 & 0.9 & 0.65 \\
  $c_3$ & 0.7 & 0.5 & 0.60 \\
  \bottomrule
\end{tabular}
\end{center}
\smallskip

\noindent
Here $S(R_1) = 0.4$, $S(R_2) = 0.5$, so $S_{\max} = 0.5$ and $S_{\min} = 0.4$.
The complementarity measure is $\mathrm{Comp} = \min\{1.4,\, 1.3,\, 1.2\} - 0.9 = 0.3$, which exceeds the floor gap of $S_{\max}-S_{\min} = 0.1$.
The ensemble achieves $S^{(2)} = 0.60 > 0.50=S_{\max}$: $R_1$'s weakness on $c_2$ is compensated by $R_2$, and vice versa for $c_1$. If both scorers shared the same weakness ($R_2(c_2) = 0.4$), $\mathrm{Comp}$ would merely match the floor gap (both being zero) and the ensemble would not improve.

\subsection{$O(\tfrac{1}{K})$ diminishing returns proof}
\label{sec:app-adding}

We present the full proof of Proposition~\ref{prop:diminishing} from Section~\ref{ssec:diminishing}.

\smallskip
\diminishing*
\smallskip

\begin{proof}
The new ensemble average is:
\begin{equation}
  \bar{R}^{(K+1)}(c)
  = \frac{K  \bar{R}^{(K)}(c) + R_{K+1}(c)}{K+1}.
\end{equation}
To raise the ensemble floor $S^{(K+1)} > S^{(K)}$, we require $\bar{R}^{(K+1)}(c) > S^{(K)}$ for every $c \in y^*$.
Substituting and rearranging:
\begin{align}
  K \cdot \bar{R}^{(K)}(c) + R_{K+1}(c)
  &> (K+1) \cdot S^{(K)} \nonumber\\
  R_{K+1}(c)
  &> S^{(K)} - K \cdot \delta^K(c).
\end{align}

For the converse direction, we simply follow the algebraic manipulations in reverse.

Next we consider the bound in \Cref{eq:diminishing-bound}. Let $c^* =\arg\min_{c \in y^*} \bar{R}^{(K)}(c)$ such that $\bar{R}^{(K)}(c^*) = S^{(K)}$. We then have
\begin{align}
\begin{aligned}
S^{(K+1)} &= \min_{c\in y^*} \bar{R}^{(K+1)}(c)\\ &\le \bar{R}^{(K+1)}(c^*) = \frac{K S^{(K)} + R_{K+1}(c^*)}{K+1}.
\end{aligned}
\end{align}
Since relevance scores are in $[0, 1]$ by definition, $R_{K+1}(c^*) \le 1$:
\begin{equation}
S^{(K+1)} \le \frac{K S^{(K)} + 1}{K+1}.
\end{equation}
Subtracting $S^{(K)}$ from both sides yields the upper bound:
\begin{align}
S^{(K+1)} - S^{(K)} 
&\le \frac{1 - S^{(K)}}{K+1}  = O\left(\frac{1}{K}\right),
\end{align}
where we note that $0\leq S^{(K)} \leq 1$ again by our definitions.
\end{proof}

\subsection{Extension to $\beta<1$}
\label{sec:app-comp-beta}

This subsection formalizes the extension of results from \Cref{sec:theory} to $\beta < 1$, now focusing on the conformal score function $S_\beta(x, y^*)$ from \Cref{eq:s-beta}.
\textbf{Generalized $\mathrm{Comp}_\beta$ decomposition.}  For $\beta < 1$
and $K = 2$ scorers, define
\begin{multline}\label{eq:comp-beta}
  \mathrm{Comp}_\beta(R_1, R_2) \\
  = \mathrm{OrderStat}_r\bigl(\{R_1(c) + R_2(c) : c \in y^*\}\bigr) \\
  \qquad - \bigl[S_\beta(R_1) + S_\beta(R_2)\bigr],
\end{multline}
using the $r$'th order statistic over positive.

\begin{lemma}[General ensemble floor advantage]
\label{prop:comp-beta}
With $S_{\beta,\max} = \max(S_\beta(R_1), S_\beta(R_2))$ and $S_{\beta,\min} = \min(S_\beta(R_1), S_\beta(R_2))$, the mean ensemble floor
\begin{equation}
    S_\beta^{(2)}(x, y^*) = \bar R^{(2)}_{(r)}(c;x)
\end{equation}
improves, $S^{(2)}_\beta > S_{\beta,\max}$, iff $\mathrm{Comp}_\beta > S_{\beta,\max} - S_{\beta,\min}$.
\end{lemma}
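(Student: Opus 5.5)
The plan mirrors the proof of Lemma~\ref{lem:complementarity}, with the minimum replaced by the $r$th order statistic. The only structural fact needed is that the $r$th order statistic commutes with positive scaling. For any finite multiset $\{a_c\}$ and any $\lambda>0$ we have $\mathrm{OrderStat}_r(\{\lambda a_c\}) = \lambda\,\mathrm{OrderStat}_r(\{a_c\})$, because multiplication by $\lambda$ preserves the ordering. Applying this with $\lambda=\tfrac12$ to the multiset $\{R_1(c)+R_2(c): c\in y^*\}$ gives
\begin{equation*}
S^{(2)}_\beta = \bar R^{(2)}_{(r)} = \tfrac12\,\mathrm{OrderStat}_r\bigl(\{R_1(c)+R_2(c) : c\in y^*\}\bigr).
\end{equation*}
Here $r = |y^*| - \lceil\beta|y^*|\rceil + 1$ is the same index for all three scoring functions, since it depends only on $|y^*|$ and $\beta$.

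Substituting the definition of $\mathrm{Comp}_\beta$ in \Cref{eq:comp-beta} then yields the exact decomposition
\begin{equation*}
S^{(2)}_\beta = \tfrac12\bigl[S_\beta(R_1)+S_\beta(R_2)+\mathrm{Comp}_\beta\bigr] = \tfrac12\bigl[S_{\beta,\max}+S_{\beta,\min}+\mathrm{Comp}_\beta\bigr],
\end{equation*}
using that $\{S_\beta(R_1),S_\beta(R_2)\}=\{S_{\beta,\max},S_{\beta,\min}\}$. Subtracting $S_{\beta,\max}$ gives
\begin{equation*}
S^{(2)}_\beta - S_{\beta,\max} = \tfrac12\bigl[\mathrm{Comp}_\beta - (S_{\beta,\max}-S_{\beta,\min})\bigr].
\end{equation*}
This is positive iff $\mathrm{Comp}_\beta > S_{\beta,\max}-S_{\beta,\min}$. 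Both directions of the equivalence follow at once because the identity is exact.

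The step I expect to need the most care is not the equivalence but the surrounding interpretation. Unlike the $\beta=1$ case, $\mathrm{Comp}_\beta$ need not be nonnegative, because the $r$th order statistic is not superadditive for $r>1$. The lemma as stated does not claim nonnegativity, so I will only remark on this point and not rely on it. I will also add the equivalent reformulation quoted in the main text. The condition $S^{(2)}_\beta > S_{\beta,\max}$ holds iff the $r$th smallest value of $\bar R^{(2)}$ on $y^*$ exceeds $S_{\beta,\max}$. That in turn holds iff at most $r-1$ positives have $\bar R^{(2)}(c)\le S_{\beta,\max}$, i.e.\ at least $|y^*|-r+1=\lceil\beta|y^*|\rceil$ positives satisfy $\bar R^{(2)}(c) > S_{\beta,\max}$. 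This last equivalence comes from the elementary characterization of order statistics by counting, and I would verify it with a short counting argument.
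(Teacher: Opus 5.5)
Your proposal is correct and follows the paper's proof essentially line for line. You write $S^{(2)}_\beta = \tfrac12\,\mathrm{OrderStat}_r(\{R_1(c)+R_2(c)\})$, substitute the definition of $\mathrm{Comp}_\beta$ to get the exact decomposition, and read off the equivalence after subtracting $S_{\beta,\max}$; your remarks on the possible negativity of $\mathrm{Comp}_\beta$ and the counting reformulation with $\lceil\beta|y^*|\rceil$ positives also match the paper, and you additionally spell out why the order statistic commutes with scaling by $\tfrac12$, which the paper leaves implicit.
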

\begin{proof}
If we use the $\mathrm{OrderStat}$ notation, the ensemble scoring function is $S_\beta^{(2)} = \tfrac{1}{2}\,\mathrm{OrderStat}_r(\{R_1(c) + R_2(c)\})$.  Substituting the $\mathrm{Comp}_\beta$ definition gives $S_\beta^{(2)} = \tfrac{1}{2}(S_{\beta,\max} + S_{\beta,\min} + \mathrm{Comp}_\beta)$. Hence we immediately find that $S^{(2)}_\beta-S_{\beta,\max} > 0$ exactly when $\mathrm{Comp}_\beta > S_{\beta,\max} - S_{\beta,\min}$.
\end{proof}
\noindent We can equivalently write this complementarity condition as $S_\beta^{(2)} > S_{\beta,\max}$ iff at least $\lceil \beta |y^*| \rceil$ positives satisfy $\bar{R}(c) > S_{\beta,\max}$.
\begin{proof}
$S_\beta^{(2)}$ exceeds $S_{\beta,\max}$ iff at most $r-1$ values are $\leq S_{\beta,\max}$, i.e., at least $|y^*| - r + 1 = \lceil \beta |y^*| \rceil$ values are $> S_{\beta,\max}$.  At $\beta = 1$ this reduces to Lemma~\ref{lem:complementarity}'s ``for all $c \in y^*$''
condition.
\end{proof}

Note that the $\min$ function (order statistic $r = 1$) is concave, so $\mathrm{Comp}_1 \geq 0$ always holds; higher-order statistics ($r > 1$) are not concave, and $\mathrm{Comp}_\beta$ can be strictly negative. Averaging scoring functions at $\beta = 1$ cannot lower the floor, but we have no such guarantee at $\beta < 1$.

\begin{proposition}[General $O(\frac{1}{K})$ diminishing returns]
\label{prop:adding-beta}
Adding $R_{K+1}$ gives $S_\beta^{(K+1)} > S_\beta^{(K)}$ iff at least $\lceil \beta |y^*| \rceil$ positives satisfy $R_{K+1}(c) > S_\beta^{(K)} - K \delta_\beta^K(c)$, with $\delta_\beta^K(c) = \bar{R}^{(K)}(c) - S_\beta^{(K)}$. When this condition holds,  \begin{equation}
0 < S_\beta^{(K+1)} {-} S_\beta^{(K)} \leq \frac{1-S_\beta^{(K)}}{K+1} = O\Big(\frac{1}{K}\Big).
\end{equation}
\end{proposition}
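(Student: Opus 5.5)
The plan is to mirror the proof of \Cref{prop:diminishing}, replacing every use of ``$\min$'' by the $r$-th order statistic with $r = |y^*| - \lceil \beta |y^*| \rceil + 1$. The single tool I will use throughout is a tie-robust counting equivalence. For any finite multiset of values $\{v_c : c \in y^*\}$ and any threshold $t$,
\begin{equation*}
\mathrm{OrderStat}_r(\{v_c\}) \le t \iff \bigl|\{c : v_c \le t\}\bigr| \ge r .
\end{equation*}
Equivalently, $\mathrm{OrderStat}_r > t$ iff at least $|y^*| - r + 1 = \lceil \beta |y^*| \rceil$ of the values exceed $t$. This is the same fact used in the remark after Lemma~\ref{prop:comp-beta}. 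I will state it once and then apply it twice.

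\textbf{Step 1 (the iff).} Apply the equivalence with $v_c = \bar R^{(K+1)}(c)$ and $t = S_\beta^{(K)}$. This gives $S_\beta^{(K+1)} > S_\beta^{(K)}$ iff at least $\lceil \beta |y^*| \rceil$ positives satisfy $\bar R^{(K+1)}(c) > S_\beta^{(K)}$. Now write $\bar R^{(K+1)}(c) = \bigl(K \bar R^{(K)}(c) + R_{K+1}(c)\bigr)/(K+1)$ and substitute $\bar R^{(K)}(c) = S_\beta^{(K)} + \delta_\beta^K(c)$. The per-span inequality then rearranges reversibly to
\begin{equation*}
R_{K+1}(c) > S_\beta^{(K)} - K \delta_\beta^K(c).
\end{equation*}
This is exactly the algebra from the proof of \Cref{prop:diminishing}. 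Because the rearrangement is span-by-span and reversible, the count of spans satisfying it is unchanged, which gives the stated iff. The strict positivity $0 < S_\beta^{(K+1)} - S_\beta^{(K)}$ under the condition is then just the forward direction. Unlike the $\beta=1$ case, $\delta_\beta^K(c)$ may be negative for spans below the $r$-th order statistic, but the algebra never uses its sign.

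\textbf{Step 2 (the $O(1/K)$ bound).} At $\beta=1$ a single minimizer $c^*$ sufficed; here I need $r$ witnesses. Let
\begin{equation*}
L = \{c \in y^* : \bar R^{(K)}(c) \le S_\beta^{(K)}\}.
\end{equation*}
By the counting equivalence applied to $\bar R^{(K)}$ with $t = S_\beta^{(K)}$ (which holds trivially since $S_\beta^{(K)}$ is itself the $r$-th order statistic), we have $|L| \ge r$. For each $c \in L$, using $R_{K+1}(c) \le 1$,
\begin{equation*}
\bar R^{(K+1)}(c) \le \frac{K S_\beta^{(K)} + 1}{K+1}.
\end{equation*}
So at least $r$ values of $\bar R^{(K+1)}$ lie at or below $(K S_\beta^{(K)} + 1)/(K+1)$. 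The equivalence then gives
\begin{equation*}
S_\beta^{(K+1)} \le \frac{K S_\beta^{(K)} + 1}{K+1}.
\end{equation*}
Subtracting $S_\beta^{(K)}$ yields the bound $(1 - S_\beta^{(K)})/(K+1)$. Since $S_\beta^{(K)} \in [0,1]$, this is $O(1/K)$.

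\textbf{Main obstacle.} The main subtlety is Step 2: handling ties and the lack of a unique ``$c^*$'' for a higher-order statistic. The approach here is to use the set $L$ of all spans at or below the current $r$-th value, rather than a single argmin, together with the counting equivalence. This makes the argument robust to ties, so no extra assumption is needed.
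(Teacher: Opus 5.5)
Your proposal is correct and follows the paper's proof of the bound essentially verbatim: the paper takes the $r$ spans $c_{(1)},\ldots,c_{(r)}$ (a size-$r$ subset of your $L$) as witnesses, bounds each new ensemble score by $(K S_\beta^{(K)}+1)/(K+1)$ using $R_{K+1}\le 1$, and concludes via the $r$-th order statistic. Your Step 1 derives the iff from the tie-robust counting equivalence plus the span-by-span reversible rearrangement; it is correct and supplies a part the paper's proof leaves implicit, since the paper only writes out the upper bound.
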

\begin{proof}
Because $S_\beta^{(K)}$ is defined as the $r$-th order statistic of scoring functions on $c\in y^*$, for all $i \in \{1, 2, \ldots, r\}$ with $r\geq 1$, we have
\begin{equation}
\bar{R}^{(K)}_{(i)}=\bar{R}^{(K)}(c_{(i)}) \leq S_\beta^{(K)}
\end{equation}
Under the expanded ensemble $\bar{R}^{(K+1)}$, the result for any $i\leq r$ is
\begin{equation}
\bar{R}^{(K+1)}(c_{(i)}) = \frac{K \bar{R}^{(K)}(c_{(i)}) + R_{K+1}(c_{(i)})}{K+1}.
\end{equation}
Using $R_{K+1}(c) \le 1$ (since $R_{K+1}(c)\in [0, 1]$ by definition),
\begin{equation}
\bar{R}^{(K+1)}(c_{(i)}) \le \frac{K S_\beta^{(K)} + 1}{K+1}.
\end{equation}
Because all $r$ spans in $\{c_{(1)}, \ldots, c_{(r)}\}$ satisfy this inequality, there exist at least $r$ positive content spans whose new ensemble scores do not exceed $\frac{K S_\beta^{(K)} + 1}{K+1}$. By definition of the $r$-th order statistic, the $r$-th smallest score $S_\beta^{(K+1)}$ cannot exceed the maximum score among any subset of $r$ elements. Thus,
\begin{equation}
S_\beta^{(K+1)} \le \frac{K S_\beta^{(K)} + 1}{K+1}.
\end{equation}
Subtracting $S_\beta^{(K)}$ from both sides yields
\begin{equation}
S_\beta^{(K+1)} - S_\beta^{(K)} \le \frac{1 - S_\beta^{(K)}}{K+1}.
\end{equation}
\end{proof}

\section{Reproducibility Details}
\label{sec:app-reproducibility}

This appendix documents the settings required to reproduce our pipeline: model, sampling temperature, random seed, prompt templates, data splits, and calibration size.

\paragraph{Model and inference settings.}
All experiments use Gemini-2.5-Flash-Lite \citep{google2025gemini25flash}, accessed January--April 2026) at sampling temperature $T{=}0$. The random seed is fixed for ICL pool sampling, stratified selection, DPP subset sampling, negative sub-sampling in contrastive formatting, and calibration/test splits. Scoring and retry prompts are given verbatim in Appendix~\ref{sec:app-contrastive} and Appendix~\ref{sec:app-prompts}.

\paragraph{ICL embedding model.}
Both embedding-based ICL selection strategies (\texttt{anchor\_dpp}, \texttt{pattern\_dpp}) use all-MiniLM-L6-v2 \citep{reimers2019sentencebert} via the \texttt{sentence-transformers} library. This 22M-parameter model produces 384-dimensional embeddings.

\paragraph{Data splits and sample counts.}
Data split sizes and per-dataset sample counts are reported in Table~\ref{tab:datasets}. Calibration size is fixed at $n=100$ across all datasets

\paragraph{Pool and calibration sizing rationale.}
The ICL pool size is set to 50 for single-intent datasets, or 20 per intent for multi-intent datasets (see \Cref{tab:datasets}). With $k{=}2$ examples per sub-scoring function and $K{=}4$ sub-scoring functions, only 8 pool examples are consumed per test sample, but a larger pool lets the diversity-maximizing strategies (\texttt{anchor\_dpp}, \texttt{pattern\_dpp}) select from a richer candidate set per input, improving complementarity.  A calibration size of $n=100$ balances conformal threshold variance against annotation cost, and gives reasonably precise coverage estimates in our experiments. Both sizes are fixed across all datasets, making the budget requirement predictable for practitioners.

\paragraph{Code availability.}
All experiment code, ICL strategy implementations, and data processing scripts are available at \href{https://github.com/layer6ai-labs/conformal-relevance}{github.com/layer6ai-labs/conformal-relevance}.

\subsection{Dataset Reformulations}
\label{sec:app-datasets}

Four of the seven datasets have their native task reformulated to per-sentence binary relevance. The mapping from native task to sentence-level positive definition is:

\begin{itemize}[leftmargin=*,itemsep=2pt,topsep=2pt]
  \item \textbf{HotpotQA} \citep{yang2018hotpotqa} --- Multi-hop QA with supporting-fact annotations. Sentence-level relevance: a context sentence is positive when it is annotated as a supporting fact for the question. The question text fills the \texttt{intent} field.
  \item \textbf{ECTSum} \cite{mukherjee-etal-2022-ectsum} --- Abstractive financial-call summarization. Sentence-level relevance: a transcript sentence is positive when it is included in the human-written summary (matched by ROUGE-based alignment). No intent / single-intent dataset.
  \item \textbf{ContractNLI} \cite{koreeda2021contractnli} --- Contract NLI (entailment/contradiction/neutral hypothesis classification). Sentence-level relevance: a contract span is positive when it appears in the gold evidence set for the (contract, hypothesis) pair. The 17 hypotheses act as intents (stratified ICL); only Entailment + Contradiction pairs are retained as usable.
  \item \textbf{Evidence Inference} \cite{deyoung2020evidence} --- Clinical-trial outcome inference. Sentence-level relevance: an article sentence is positive when it is annotated as an evidence span for the (intervention, comparator, outcome) prompt. The 8 user splits act as intents (stratified ICL).
\end{itemize}

The remaining three datasets (SubSumE, PUMA, PhysioNet) come with sentence-level relevance labels; no reformulation is applied. PhysioNet sentences come from the PhysioNet Deidentified Medical Text v1.0 corpus \citep{PhysioNet-deidentifiedmedicaltext-1.0,neamatullah2008automated}, and per-sentence PHI labels follow the i2b2 2014 deidentification schema \citep{stubbs2015automated} applied on top of the corpus.

\vspace{-2pt}
\section{Prompting and ICL Specification}
\label{sec:prompting_specification}
\vspace{-2pt}
In this appendix, we provide information on prompts and other details on our method.
\vspace{-2pt}
\subsection{Contrastive ICL Format Specification}
\label{sec:app-contrastive}
\vspace{-2pt}
This appendix provides the full contrastive ICL format specification used to present labeled examples to the LLM, the scoring prompt template, and the retry logic for handling malformed LLM outputs. The contrastive format lists positive sentences (``Sentences selected'') and a balanced sample of negative sentences (``Sentences NOT selected''), with a minimum of two negatives.

\paragraph{Scoring prompt template.}
The base prompt instructs the LLM to infer a selection distinction from the ICL block and apply it to the evaluation section. Placeholders: \texttt{\{icl\_examples\}} is replaced by the formatted ICL block; \texttt{\{task\_line\}} is an optional one-line hint prepended for domain-specific tasks (empty string otherwise); \texttt{\{intent\}} is the per-sample query or perspective; \texttt{\{sentences\}} is the numbered sentence list. For no-intent datasets (ECTSum, PhysioNet), the \texttt{Intent:} line is omitted entirely and a variant template is used.

\begin{Verbatim}[fontsize=\scriptsize,breaklines=true,breakanywhere=true]
In the examples below, some sentences were selected and others were not. Identify what distinguishes the selected sentences from the non-selected ones. Then apply that SAME distinction to score each sentence in the evaluation section.

Each score should be a two decimal float between 0 and 1. A sentence that clearly matches the demonstrated distinction should score close to 1 (> 0.8). A sentence that does not match should score close to 0 (< 0.2). Use intermediate scores (0.3, 0.5, 0.7) for partial matches.

IMPORTANT: Output must be valid JSON format with sentence indices as keys.

{icl_examples}
Now evaluate the following:

{task_line}Intent: {intent}
Sentences to evaluate:
{sentences}

Scores (JSON format):
\end{Verbatim}

\paragraph{Contrastive ICL example format.}
Each pool example is formatted as: the full sentence list (numbered 0-indexed), then the selected positives as a comma-separated list of quoted sentence texts, then a balanced sample of negatives under the ``NOT selected'' header. Negatives are sub-sampled to achieve a near-1:1 ratio with positives, with a minimum of two negatives; this is capped by the number of available negatives. Sub-sampling uses a per-example deterministic seed derived from the global seed. Multiple examples are concatenated with double newlines and numbered (\texttt{Example 0:}, \texttt{Example 1:}, \ldots), separated by ``\texttt{---}'' dividers.

An example block for an intent-bearing dataset looks as follows (shortened for illustration):

\begin{Verbatim}[fontsize=\scriptsize,breaklines=true,breakanywhere=true]
Example 0:
Intent: History
Sentences to evaluate:
0. The city was founded in 1832.
1. The annual rainfall averages 800 mm.
2. It was incorporated as a borough in 1901.
3. The main industry is tourism.
Sentences selected for "History": "The city was founded in 1832.", "It was incorporated as a borough in 1901." 
Sentences NOT selected for "History" (examples): "The annual rainfall averages 800 mm.", "The main industry is tourism."
---
\end{Verbatim}

For no-intent datasets, the \texttt{Intent:} line is omitted and the labels read \texttt{Sentences selected:} and \texttt{Sentences NOT selected (examples):}.

\paragraph{Retry prompt.}
When the LLM response omits sentence indices, the missing sentences are re-submitted using the retry template below. The same ICL block is included to maintain consistency with the original scoring context.

\begin{Verbatim}[fontsize=\scriptsize,breaklines=true,breakanywhere=true]
You previously scored sentences but missed some. In the examples below, some sentences were selected and others were not. Identify what distinguishes the selected sentences from the non-selected ones. Then apply that SAME distinction to score each MISSED sentence below.

Each score should be a two decimal float between 0 and 1. A sentence that clearly matches the demonstrated distinction should score close to 1 (> 0.8). A sentence that does not match should score close to 0 (< 0.2). Use intermediate scores (0.3, 0.5, 0.7) for partial matches.

IMPORTANT: Output must be valid JSON format with sentence indices as keys.

{icl_examples}
Now evaluate the following:

{task_line} Intent: {intent}
Missing sentences:
{missing_sentences}

Scores (JSON format with ONLY the missing sentence numbers as keys):
\end{Verbatim}

\paragraph{Format as controlled variable.}
The contrastive format is held constant across all experimental conditions: ICL-free baselines (ICL0), individual strategies, and ensembles. All comparisons in Table~\ref{tab:main} (main result) and Table~\ref{tab:app-composition} (composition ablation) are therefore measured within the same format, and ensemble gains over single strategies are attributable to strategy diversity, not prompt formatting. A format--strategy interaction is theoretically possible: different strategies select examples with different positive/negative ratios, and the contrastive sub-sampling handles these differently. However, such an interaction would need to \emph{systematically} favor diverse strategy sets over individual strategies across all seven datasets---a strong and specific assumption unsupported by any proposed mechanism. Preliminary experiments during development confirmed that the contrastive layout outperforms both positive-only (no negatives shown) and full-document (all sentences labeled, no sub-sampling) alternatives for individual strategies; a full format $\times$ ensemble factorial ablation is left to future work.

\subsection{Windowed Context Algorithm and Ablation}
\label{sec:app-windowed}

This appendix describes the windowed ICL compression algorithm used for long-document datasets and presents ablation results comparing windowed and full ICL on SubSumE (average 461 sentences per document). The algorithm retains all positive sentences, asymmetric random-width context windows around each positive, and a sample of remote negatives, substantially reducing prompt length while preserving ICL quality.

\paragraph{Algorithm.}
Windowed context is applied automatically when a dataset's average document length exceeds 30 sentences or its average positive rate falls below 15\%; both SubSumE and Evidence Inference trigger this condition, while PUMA, ECTSum, PhysioNet, HotpotQA, and ContractNLI do not. For each ICL pool example selected by a retrieval strategy, the full sentence list is replaced by a compressed version retaining: (i)~all positive sentences; (ii)~a local context window around each positive with independently sampled width in each direction (left width $\sim \mathcal{U}(1, w_L)$, right width $\sim \mathcal{U}(1, w_R)$, with $w_L = w_R = w$); and (iii)~a uniform random subsample of sentences outside all windows (``remote negatives''), with subsample size set to match the number of context-window negatives. The context-window half-width $w$ is fixed at $w = 2$ throughout. For test documents, no windowing is applied: the full sentence list is scored in a single LLM call.

\section{ICL Selection Strategy Details}
\label{sec:app-strategies}
\vspace{-2pt}
This appendix provides detailed algorithmic descriptions for the four ICL selection strategies $\sigma_j$ used in the Ens4 ensemble: \texttt{anchor\_dpp} (centroid anchor with conditional DPP), \texttt{pattern\_dpp} (DPP over relevance-direction vectors), \texttt{bm25} (lexical retrieval with the BM25 algorithm), and \texttt{random} (uniform sampling from the ICL). We include DPP kernel formulations, embedding model specifications, and BM25 parameter settings.
\vspace{-2pt}
\paragraph{Anchor-DPP (\texttt{anchor\_dpp}).}
A two-phase selection procedure that combines centroid proximity with conditional diversity. First, the query document $x$ is embedded via a language embedding model (see App.~\ref{sec:app-reproducibility}). In Phase~1, the example document from the ICL pool which has embedding nearest to the query embedding as measured by cosine similarity is selected as the anchor, and included in the final ICL set. In Phase~2, the remaining $k - 1$ examples are drawn from a conditional Determinantal Point Process (DPP) whose kernel is the cosine Gram matrix of pool embeddings, ridge-regularized to ensure positive definiteness. Conditioning on the anchor biases the sample toward examples that are both semantically close to the query document and mutually diverse relative to it. This strategy targets ICL pool examples that are representative of the test domain while remaining internally diverse, reducing the chance that all $k$ examples provide redundant information to the LLM scorer.
\vspace{-2pt}
\paragraph{Pattern-DPP (\texttt{pattern\_dpp}).}
This DPP operates in the space of sentence-level embeddings rather than full-document embeddings. For each ICL pool example, a ``relevance direction'' is computed as the difference between the centroid embedding of its positive spans ($c\in y^*$) and the centroid embedding of its negative spans ($c\not\in y^*$). The relevance direction captures the specific distinction of content relevance the document exemplifies. The $k$ ICL examples are then selected by running a DPP over the L2-normalized relevance directions to maximize diversity. Because this strategy selects examples whose relevance directions are maximally orthogonal, the resulting scoring function captures a different cross-section of the relevance signal than \texttt{anchor\_dpp}.
\vspace{-2pt}
\paragraph{BM25 (\texttt{bm25}).}
This strategy uses lexical retrieval with the Okapi BM25 algorithm \citep{robertson2009bm25}.
For each intent within the ICL pool, or the full pool for uni-intent datasets, a BM25 index is built over the concatenated text of documents, tokenized by whitespace. At scoring time, the query document $x$ is used as a query to the index and the $k$ highest-scoring documents are retrieved. BM25 captures lexical overlap that semantic embeddings may miss---recurring domain-specific terminology, abbreviations, or proper nouns---and is entirely independent of the embedding model used by \texttt{anchor\_dpp} and \texttt{pattern\_dpp}, making it a structurally distinct signal within the ensemble.
\vspace{-2pt}
\paragraph{Random (\texttt{random}).}
This strategy samples documents uniformly at random without replacement from the intent-stratified pool. Given a query document $x$, $k$ examples are drawn from the ICL pool $\Pi$ using a fixed seed, and independently of the query document's content or embedding. This strategy introduces no retrieval bias and serves as an ensemble regularizer: its scores are uncorrelated with any document-specific signal, so averaging with embedding-based scorers dampens their shared biases rather than reinforcing them.

\vspace{-2pt}
\section{ICL0 Prompts}
\label{sec:app-prompts}
\vspace{-2pt}
This appendix contains the task-specific prompt templates used for the ICL0 baseline. The ECTSum prompt was adapted from the Importance Scoring prompt of \citet{kuwahara2025conformal}, with minor modifications for sentence-level extraction and output formatting; prompts for the remaining tasks followed the same structure with manually written task descriptions. For six of seven datasets, the prompts were written once without empirical tuning on labeled data. For Evidence Inference, candidate variants were evaluated on the 100-sample calibration set, and the selected prompt was evaluated once on the untouched test set.

\subsection{ECTSum}
\vspace{-2pt}
Financial earnings call summarization. The scoring function evaluates sentence-level alignment with the overall message of the transcript. No intent placeholder; the task is fully specified by the prompt framing.

\begin{Verbatim}[fontsize=\scriptsize,breaklines=true,breakanywhere=true]
Evaluate the importance of each input sentence in the original text, based on how the information carried in the sentence is aligned with the overall message. and provide a importance score for EACH input sentence. Each output score should be a two decimal float number ranged between 0 and 1, indicating how important the corresponding input sentence is in the context of the text document. For example, if sentence 1's information is highly aligned with that of the input text, and very likely to be included in the summary, then score 1 should be close to 1, say greater than 0.8; if information carried in sentence 3 is trivial or only remotely related to the central message of the text, and is not worthy of inclusion in the summary, then score 3 should be close to 0, say less than 0.2.

IMPORTANT: Output must be valid JSON format with sentence indices as keys.

Sentences to evaluate:
{sentences}

Importance scores (JSON format):
\end{Verbatim}

\subsection{SubSumE}
\vspace{-2pt}
Query-focused summarization over Wikipedia articles. \texttt{\{intent\}}
is filled at runtime with one of 10 query intents (e.g., ``History'',
``Geography'').

\begin{Verbatim}[fontsize=\scriptsize,breaklines=true,breakanywhere=true]
Evaluate each sentence based on how well it answers or provides information relevant to the given query. Provide a score for EACH sentence. Each score should be a two decimal float between 0 and 1.

For example, if a sentence directly answers or supports the query, its score should be close to 1 (greater than 0.8); if a sentence is unrelated to the query or provides only irrelevant background, its score should be close to 0 (less than 0.2).

IMPORTANT: Output must be valid JSON format with sentence indices as keys.

Query: {intent}
Sentences to evaluate:
{sentences}

Scores (JSON format):
\end{Verbatim}
\subsection{PUMA}
\vspace{-2pt}
Perspective-based QA over medical discussion threads. \texttt{\{intent\}} is filled with one of 5 perspective labels (e.g., ``Information'', ``Suggestion'').

\begin{Verbatim}[fontsize=\scriptsize,breaklines=true,breakanywhere=true]
Evaluate each sentence based on how well it expresses or addresses the given perspective in this discussion. Provide a score for EACH sentence. Each score should be a two decimal float between 0 and 1.

For example, if a sentence directly addresses the {intent} perspective, its score should be close to 1 (greater than 0.8); if a sentence is unrelated to the {intent} perspective or addresses a different aspect, its score should be close to 0 (less than 0.2).

IMPORTANT: Output must be valid JSON format with sentence indices as keys.

Perspective: {intent}
Sentences to evaluate:
{sentences}

Scores (JSON format):
\end{Verbatim}
\subsection{PhysioNet}
\vspace{-2pt}
PHI entity detection in de-identified clinical notes. No intent placeholder; PHI categories are enumerated directly in the prompt.

\begin{Verbatim}[fontsize=\scriptsize,breaklines=true,breakanywhere=true]
Evaluate each sentence for the presence of Protected Health Information (PHI). PHI includes: patient names, doctor names, dates (birth, admission, discharge, specific times), locations (cities, hospitals, addresses), medical record numbers, phone numbers, and ages. Provide a PHI score for EACH sentence. Each score should be a two decimal float between 0 and 1.

For example, if a sentence contains clear PHI such as a person's name, specific date, or location, its score should be close to 1 (greater than 0.8); if a sentence contains only general medical information with no personal identifiers, its score should be close to 0 (less than 0.2).

IMPORTANT: Output must be valid JSON format with sentence indices as keys.

Sentences to evaluate:
{sentences}

PHI scores (JSON format):
\end{Verbatim}
\subsection{HotpotQA}
\vspace{-2pt}
Multi-hop QA supporting fact identification.  \texttt{\{intent\}} is the per-sample question (each sample has a unique question, so no intent stratification is applied).

\begin{Verbatim}[fontsize=\scriptsize,breaklines=true,breakanywhere=true] 
Evaluate each sentence based on how useful it is as a supporting fact for answering the given question. Provide a score for EACH sentence. Each score should be a two decimal float between 0 and 1.

For example, if a sentence provides information needed to answer the question, its score should be close to 1 (greater than 0.8); if a sentence provides unrelated information that doesn't help answer the question, its score should be close to 0 (less than 0.2).

IMPORTANT: Output must be valid JSON format with sentence indices as keys.

Question: {intent}
Sentences to evaluate:
{sentences}

Scores (JSON format):
\end{Verbatim}
\subsection{Evidence Inference}
\vspace{-2pt}
Statistical evidence extraction from clinical trial reports. \texttt{\{intent\}} is one of 8 annotator perspective identifiers (\texttt{User0}--\texttt{User7}).

\begin{Verbatim}[fontsize=\scriptsize,breaklines=true,breakanywhere=true]
From {intent}'s perspective, evaluate each sentence for its value as statistical evidence of the treatment's effect on the outcome. Evidence sentences typically contain: statistical results (p-values, confidence intervals), outcome measurements, comparative data between treatment and control groups, hazard ratios, or effect sizes. Provide an evidence score for EACH sentence. Each score should be a two decimal float between 0 and 1.

For example, if a sentence reports statistical results or quantitative treatment outcomes, its score should be close to 1 (greater than 0.8); if a sentence contains only background information, methodology description, or non-quantitative text, its score should be close to 0 (less than 0.2).

IMPORTANT: Output must be valid JSON with ALL sentence indices as keys. Every sentence must be scored and every index must appear.

Perspective: {intent}
Sentences to evaluate:
{sentences}

Evidence scores (JSON format):
\end{Verbatim}
\subsection{ContractNLI}
\vspace{-2pt}
Legal NDA clause relevance scoring. \texttt{\{intent\}} is one of 17 natural language inference hypotheses (e.g., ``Receiving Party shall not use any Confidential Information for any purpose other than the Authorized Purpose'').

\begin{Verbatim}[fontsize=\scriptsize,breaklines=true,breakanywhere=true]
Evaluate each contract clause (sentence or list item) based on whether it provides evidence for or against the given hypothesis about this NDA. A clause is evidence if it directly supports, contradicts, or qualifies the hypothesis. Provide a relevance score for EACH clause. Each score should be a two decimal float between 0 and 1.

For example, if a clause directly addresses the hypothesis (e.g., explicitly states an obligation or right that the hypothesis describes), its score should be close to 1 (greater than 0.8); if a clause is about unrelated obligations, definitions, or boilerplate unrelated to the hypothesis, its score should be close to 0 (less than 0.2).

IMPORTANT: Output must be valid JSON format with sentence indices as keys.

Hypothesis: {intent}
Contract clauses to evaluate:
{sentences}

Relevance scores (JSON format):
\end{Verbatim}

\begin{table*}[t]
\centering
\caption{Main empirical results for MAP comparing all ICL selection strategies. \textbf{Bold} marks the best configuration per row. Avg Single is the mean MAP across the four strategies.}
\label{tab:app-full-matrix}
\small
\begin{tabular}{lccccccc}
\toprule
Dataset & ICL0 & \rotatebox{70}{\texttt{anchor\_dpp}} & \rotatebox{70}{\texttt{bm25}} & \rotatebox{70}{\texttt{pattern\_dpp}} & \rotatebox{70}{\texttt{random}} & Avg Single & Ens4 \\
\midrule
ECTSum          & 0.350 & 0.437 & 0.368 & 0.426 & 0.339 & 0.393 &  \textbf{0.516} \\
Evidence Inf.   & 0.206 & 0.205 & 0.199 & 0.196 & 0.203 & 0.201 &  \textbf{0.304} \\
HotpotQA        & 0.749 & 0.749 & 0.730 & 0.739 & 0.741 & 0.740 &  \textbf{0.839} \\
PhysioNet       & 0.764 & 0.767 & 0.764 & 0.749 & 0.782 & 0.766 &  \textbf{0.879} \\
PUMA            & 0.648 & 0.777 & 0.737 & 0.741 & 0.751 & 0.752 &  \textbf{0.814} \\
SubSumE         & 0.289 & 0.356 & 0.364 & 0.354 & 0.340 & 0.354 &  \textbf{0.464} \\
ContractNLI     & 0.718 & 0.706 & 0.713 & 0.717 & 0.712 & 0.712 &  \textbf{0.828} \\
\bottomrule
\end{tabular}
\end{table*}

\begin{figure*}[t]
\centering
\includegraphics[width=0.7\linewidth]{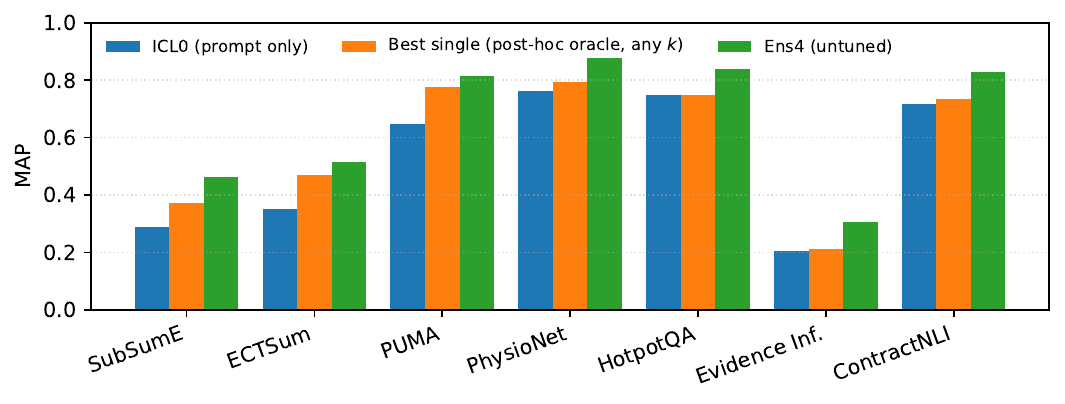}
\caption{MAP comparison across seven datasets:  Ens4 improves over both baselines on all 7 datasets. Numerical values are in Table~\ref{tab:main}.}
\label{fig:main-results}
\end{figure*}

\FloatBarrier

\section{Main Empirical Results}
\label{sec:app-strategy-matrix}

\Cref{tab:app-full-matrix} reports MAP for each of the four Ens4 ICL selection strategies and the Ens4 ensemble at $k \in \{1, 2, 3, 5, 8\}$, with $K{=}4$ fixed for Ens4 runs. Figure~\ref{fig:main-results} visualizes the MAP comparisons from Table~\ref{tab:main}.

\paragraph{Computational cost.}
Ens4 issues $K{=}4$ scoring-LLM calls per document versus ICL0's single call, independent of dataset, model, or document length. There is also an ICL-selection overhead, run once per document on the labeled pool, which is negligible compared to the LLM API call latencies. Each Ens4 call uses the same template prompt as ICL0 augmented with $k{=}2$ contrastive examples (Appendix~\ref{sec:app-contrastive}), so input tokens decompose as
\[
\underbrace{T_{\mathrm{system}} + T_{\mathrm{hint}}}_{\text{shared with ICL0}}
+ \underbrace{2k\cdot T_{\mathrm{ex}}}_{\text{ICL block}}
+ T_{\mathrm{doc}},
\]
where the ICL block is windowed to $\pm 2$ sentences on long-document datasets (Appendix~\ref{sec:app-windowed}) and output tokens are identical across ICL0 and every Ens4 sub-call.  The input-token cost ratio
\begin{multline*}
\frac{4\,(T_{\mathrm{shared}} + 2k\cdot T_{\mathrm{ex}} + T_{\mathrm{doc}})}
     {T_{\mathrm{shared}} + T_{\mathrm{doc}}} \\
= 4\!\left(1 + \frac{2k\cdot T_{\mathrm{ex}}}{T_{\mathrm{shared}} + T_{\mathrm{doc}}}\right)
\end{multline*}
approaches $4{\times}$ from above as $T_{\mathrm{doc}}$ grows, so the multiplier is closest to $4{\times}$ on long-document datasets.

Wall-clock latency is harder to isolate because it also depends on provider-side effects such as throttling and retries. We benchmark with local caching disabled so that every request goes to the LLM's API, using 150 samples per configuration, identical hardware, and serial execution, even though Ens4's calls are embarrassingly parallel. As expected, Ens4 is more expensive than ICL0, but the resulting measurements in \Cref{tab:app-run-ids} should be interpreted as directionally indicative rather than as intrinsic latency multipliers. In practice, the four Ens4 scoring calls are independent and can be parallelized.

\begin{table}[h]
\centering
\caption{Latency for ICL0 and Ens4 with serial execution of LLM API calls. The multiplier is Ens4 relative to ICL0. Measurements include provider-side retries and should be interpreted as directionally indicative.}
\label{tab:app-run-ids}
\small
\setlength{\tabcolsep}{2pt}
\begin{tabular}{lrrr}
\toprule
Dataset & ICL0 (s/sample) & Ens4 (s/sample) & Multiplier \\
\midrule
ECTSum        & 1.64 & 8.03  & 4.9$\times$ \\
Evidence Inf. & 0.76 & 7.69  & 10.1$\times$ \\
HotpotQA      & 0.99 & 7.61  & 7.7$\times$ \\
PhysioNet     & 0.17 & 1.78  & 10.7$\times$ \\
PUMA          & 1.83 & 4.16  & 2.3$\times$ \\
SubSumE       & 2.51 & 45.19 & 18.0$\times$ \\
ContractNLI   & 1.46 & 14.03 & 9.6$\times$ \\
\bottomrule
\end{tabular}
\end{table}

\FloatBarrier

\begin{table*}[t]
\caption{MAP for all non-singleton subsets of the four ICL selection strategies at $k{=}2$; the oracle best single strategy ($K{=}1$) is shown as a reference. \textbf{Bold} marks the best subset per $K$.  $\overline{\mathrm{MAP}}$ is the mean across 7 datasets. Strategy abbreviations: a\,=\,\texttt{anchor\_dpp},  p\,=\,\texttt{pattern\_dpp}, b\,=\,\texttt{bm25}, r\,=\,\texttt{random}.}
\label{tab:app-composition}
\centering
\small
\begin{tabular}{clccccccc c}
\toprule
$K$ & Subset & SubS & ECT & PUMA & Phys & HotQ & EviI & CNLI & $\overline{\mathrm{MAP}}$ \\
\midrule
1 & Best Single${}^\dagger$ & 0.373 & 0.469 & 0.777 & 0.795 & 0.749 & 0.213 & 0.733 & 0.587 \\
\midrule
2 & \textbf{a+p} & 0.417 & 0.492 & 0.797 & 0.842 & 0.812 & 0.260 & 0.786 & 0.629 \\
2 & a+b          & 0.426 & 0.482 & 0.796 & 0.837 & 0.810 & 0.261 & 0.787 & 0.628 \\
2 & a+r          & 0.420 & 0.474 & 0.800 & 0.843 & 0.806 & 0.259 & 0.780 & 0.626 \\
2 & p+b          & 0.424 & 0.473 & 0.780 & 0.837 & 0.804 & 0.255 & 0.787 & 0.623 \\
2 & b+r          & 0.427 &0.431 & 0.785 & 0.855 & 0.797 & 0.253 & 0.785 & 0.619 \\
2 & p+r          & 0.421 & 0.468 & 0.787 & 0.854 & 0.800 & 0.249 & 0.786 & 0.623 \\
\midrule
3 & \textbf{a+p+b} & 0.450 & 0.509 & 0.807 & 0.867 & 0.832 & 0.288 & 0.816 & 0.652 \\
3 & a+p+r        & 0.445 & 0.506 & 0.808 & 0.872 & 0.830 & 0.285 & 0.813 & 0.651 \\
3 & a+b+r        & 0.450 & 0.492 & 0.807 & 0.868 & 0.827 & 0.287 & 0.814 & 0.649 \\
3 & p+b+r        & 0.452 & 0.488 & 0.800 & 0.874 & 0.824 & 0.280 & 0.814 & 0.647 \\
\midrule
4 & \textbf{a+p+b+r (Ens4)} & \textbf{0.464} & \textbf{0.516} & \textbf{0.814} & \textbf{0.879} & \textbf{0.839} & \textbf{0.304} & \textbf{0.828} & \textbf{0.664} \\
\bottomrule
\multicolumn{10}{l}{\footnotesize ${}^\dagger$Oracle post-hoc best single strategy per dataset (varies by dataset).}
\end{tabular}
\end{table*}

\begin{table*}[t]
\centering
\caption{MAP for each individual sub-strategy at $k{=}2$ and $8$. Mean $\Delta$ is the average change from $k{=}2$ to $k{=}8$.}
\label{tab:app-k-scaling-strategies}
\small
\begin{tabular}{lrrrrrrrr rr}
\toprule
& \multicolumn{2}{c}{\texttt{anchor\_dpp}} & \multicolumn{2}{c}{\texttt{bm25}} & \multicolumn{2}{c}{\texttt{pattern\_dpp}} & \multicolumn{2}{c}{\texttt{random}} \\
\cmidrule(lr){2-3} \cmidrule(lr){4-5} \cmidrule(lr){6-7} \cmidrule(lr){8-9}
Dataset & $k{=}2$ & $k{=}8$ & $k{=}2$ & $k{=}8$ & $k{=}2$ & $k{=}8$ & $k{=}2$ & $k{=}8$ & Ens4 \\
\midrule
ECTSum        & 0.437 & 0.397 & 0.368 & 0.371 & 0.426 & 0.339 & 0.339 & 0.392 & 0.516\\
SubSumE       & 0.356 & 0.347 & 0.364 & 0.369 & 0.354 & 0.359 & 0.340 & 0.361 & 0.304\\
HotpotQA      & 0.749 & 0.742 & 0.730 & 0.719 & 0.739 & 0.722 & 0.741 & 0.719 & 0.839\\
PhysioNet     & 0.767 & 0.702 & 0.764 & 0.735 & 0.749 & 0.739 & 0.782 & 0.784 & 0.879\\
PUMA          & 0.777 & 0.737 & 0.737 & 0.731 & 0.741 & 0.744 & 0.751 & 0.731 & 0.814\\
Evidence Inf. & 0.205 & 0.184 & 0.199 & 0.186 & 0.196 & 0.191 & 0.203 & 0.186 & 0.464\\
ContractNLI   & 0.706 & 0.697 & 0.713 & 0.711 & 0.717 & 0.703 & 0.712 & 0.696 & 0.828\\
\midrule
Mean $\Delta$ & \multicolumn{2}{c}{$-0.027$} & \multicolumn{2}{c}{$-0.008$} & \multicolumn{2}{c}{$-0.018$} & \multicolumn{2}{c}{$+0.000$} \\
\bottomrule
\end{tabular}
\end{table*}

\FloatBarrier

\begin{figure}[t]
\centering
\includegraphics[width=0.65\linewidth]{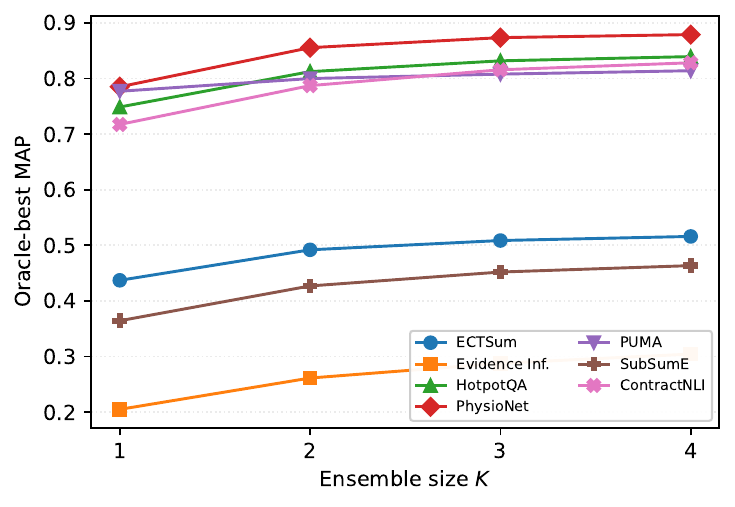}
\caption{Best MAP at each ensemble size $K \in \{1,2,3,4\}$ across all   7 datasets.  All 7 datasets show monotonically increasing, flattening curves, consistent with Proposition~\ref{prop:diminishing}'s intuition.}
\label{fig:oracle-best-k-scaling}
\end{figure}

\section{Robustness Evaluations}\label{sec:app-robustness}
\vspace{-40pt}
\subsection{Ensemble Size ($K$) and Composition}
\label{sec:app-composition}
The results in \Cref{sec:ablations} show the main outcomes. Here, Table~\ref{tab:app-composition} gives the full breakdown of MAP for each non-singleton strategy combination over the seven datasets. Ens4 ranks first among all 11 subsets on every dataset. Mean MAP rises from $0.587$ ($K{=}1$, oracle best single) to $0.625$ ($K{=}2$) to $0.650$ ($K{=}3$) to $0.664$ ($K{=}4$), with strictly diminishing marginal gains, consistent with Proposition~\ref{prop:diminishing}'s intuition. Figure~\ref{fig:oracle-best-k-scaling} visualizes the per-dataset best MAP at each $K \in \{1,2,3,4\}$.

\subsection{Complementarity vs. ICL Quantity}\label{sec:app-quantity}

The default Ens4 configuration uses $K \cdot k = 8$ ICL examples for each query $x$, spread out over 4 scoring functions. This can be directly compared to Best Single at $k{=}8$, using the entire ICL budget in a single retrieval signal, as we show in Table~\ref{tab:app-k-scaling-strategies}. Ens4 outperforms by $+0.070$ to $+0.119$ MAP on every dataset. In fact, increasing the number of ICL examples from $k=2$ to $8$ for a single strategy is not helpful.

\subsection{Complementarity vs.\ Labeled Data.}
\label{sec:app-classifier}

Compared to ICL0, Ens4 uses additional labeled data as ICL examples. To assess whether the additional labeled examples alone explain
Ens4's gains, we train a logistic regression classifier using
all-MiniLM-L6-v2 sentence and intent embeddings, the same embedding model used for DPP retrieval in Ens4. The classifier is
trained on the entire ICL pool together with the 100 calibration samples. This gives an advantage to the classifier, since each scorer in the Ens4 ensemble sees only two examples, for a maximum of eight distinct ICL examples used at a time.

Ens4 outperforms the supervised classifier on 6/7 datasets.
The classifier performs best only on ECTSum. Results are shown in \Cref{tab:app-classifier}.

\begin{table}[t]
\centering
\caption{MAP comparison between the supervised classifier baseline
and Ens4. Bold indicates the better method.}
\small
\label{tab:app-classifier}
\begin{tabular}{lcc}
\toprule
Dataset & Classifier & Ens4 \\
\midrule
ECTSum            & \textbf{0.615} & 0.516 \\
Evidence Inf.     & 0.281 & \textbf{0.304} \\
HotpotQA          & 0.281 & \textbf{0.839} \\
PhysioNet         & 0.732 & \textbf{0.879} \\
PUMA              & 0.574 & \textbf{0.814} \\
SubSumE           & 0.322 & \textbf{0.464} \\
ContractNLI       & 0.293 & \textbf{0.828} \\
\bottomrule
\end{tabular}
\end{table}

\subsection{Complementarity vs. Stochasticity and Compute}
\label{sec:app-equal-compute}

Following from \Cref{sec:ablations}, we test whether stochastic diversity is as effective as retrieval diversity. Instead of ensembling four ICL selection strategies, we ensemble one with four different settings of temperature in the LLM scoring function. Table~\ref{tab:app-equal-compute-ci} shows the MAP for temperature ensembles for each of the four strategies on four different datasets. The default Ens4 outperforms in 12/16 cases with non-overlapping bootstrap 95\% CIs (1000 resamples), while having overlap in the other 4 cases. In no case does temperature ensembling clearly outperform. Table~\ref{tab:app-equal-compute-spread} further shows individual strategy results for each $T$, showing minimal differences in MAP.

To more directly control for test-time compute, we also ensemble four ICL0 calls at temperatures $T \in \{0.3, 0.5, 0.8, 1.0\}$. This matches Ens4's four LLM calls without introducing ICL examples or retrieval diversity. As shown in Table~\ref{tab:app-icl0-equal-compute}, Ens4 significantly outperforms on 6/7 datasets; ContractNLI is statistically tied.

\begin{table*}[t]
\caption{MAP values and 95\% bootstrap CIs for temperature ensembles of single strategy scoring functions. The Ens4 column is reproduced from Table~\ref{tab:main}. The final column indicates whether the CIs overlap, or which method clearly outperforms}
\label{tab:app-equal-compute-ci}
\centering
\small
\begin{tabular}{llrrrl}
\toprule
\textbf{Dataset} & \textbf{Strategy} & \textbf{MAP} & \textbf{95\% CI} & \textbf{Ens4} & \textbf{Overlap} \\
\midrule
PhysioNet & \texttt{anchor\_dpp}  & 0.845 & [0.8223, 0.8678] & 0.879 & No, Ens4 better \\
PhysioNet & \texttt{pattern\_dpp} & 0.842 & [0.8216, 0.8621] & 0.879 & No, Ens4 better \\
PhysioNet & \texttt{bm25}  & 0.855 & [0.8341, 0.8754] & 0.879 & No, Ens4 better \\
PhysioNet & \texttt{random} & 0.882 & [0.8625, 0.8997] & 0.879 & Yes \\
PUMA      & \texttt{anchor\_dpp} & 0.817 & [0.8109, 0.8228] & 0.814 & Yes \\
PUMA      & \texttt{pattern\_dpp}& 0.783 & [0.7762, 0.7898] & 0.814 & No, Ens4 better \\
PUMA      & \texttt{bm25}  & 0.794 & [0.7873, 0.8001] & 0.814 & No, Ens4 better \\
PUMA      & \texttt{random}& 0.804 & [0.7982, 0.8108] & 0.814 & No, Ens4 better \\
SubSumE   & \texttt{anchor\_dpp} & 0.435 & [0.4229, 0.4473] & 0.464 & No, Ens4 better \\
SubSumE   & \texttt{pattern\_dpp} & 0.438 & [0.4251, 0.4505] & 0.464 & No, Ens4 better \\
SubSumE   & \texttt{bm25}  & 0.450 & [0.4370, 0.4624] & 0.464 & No, Ens4 better \\
SubSumE   & \texttt{random} & 0.441 & [0.4283, 0.4541] & 0.464 & No, Ens4 better \\
ECTSum    & \texttt{anchor\_dpp} & 0.518 & [0.5065, 0.5293] & 0.516 & Yes \\
ECTSum    & \texttt{pattern\_dpp} & 0.516 & [0.5047, 0.5268] & 0.516 & Yes \\
ECTSum    & \texttt{bm25} & 0.478 & [0.4667, 0.4901] & 0.516 & No, Ens4 better \\
ECTSum    & \texttt{random} & 0.447 & [0.4367, 0.4582] & 0.516 & No, Ens4 better \\
\bottomrule
\end{tabular}
\end{table*}

\begin{table*}[t]
\caption{MAP of each single-strategy run by temperature $T$. \textbf{span} = max $-$ min across $T$.\label{tab:app-equal-compute-spread}}
\centering
\small
\begin{tabular}{llrrrrr}
\toprule
\textbf{Dataset} & \textbf{ICL Strategy} & $T{=}0.3$ & $T{=}0.5$ & $T{=}0.8$ & $T{=}1.0$ & \textbf{span} \\
\midrule
PhysioNet & \texttt{anchor\_dpp}  & 0.763 & 0.768 & 0.779 & 0.764 & 0.016 \\
PhysioNet & \texttt{pattern\_dpp} & 0.743 & 0.735 & 0.742 & 0.752 & 0.017 \\
PhysioNet & \texttt{bm25}         & 0.769 & 0.760 & 0.762 & 0.769 & 0.009 \\
PhysioNet & \texttt{random}       & 0.798 & 0.792 & 0.789 & 0.798 & 0.009 \\
PUMA      & \texttt{anchor\_dpp}  & 0.777 & 0.771 & 0.772 & 0.768 & 0.009 \\
PUMA      & \texttt{pattern\_dpp} & 0.739 & 0.739 & 0.740 & 0.739 & 0.001 \\
PUMA      & \texttt{bm25}         & 0.739 & 0.740 & 0.739 & 0.732 & 0.008 \\
PUMA      & \texttt{random}       & 0.758 & 0.757 & 0.756 & 0.752 & 0.006 \\
SubSumE   & \texttt{anchor\_dpp}  & 0.339 & 0.346 & 0.350 & 0.352 & 0.013 \\
SubSumE   & \texttt{pattern\_dpp} & 0.344 & 0.356 & 0.358 & 0.351 & 0.014 \\
SubSumE   & \texttt{bm25}         & 0.359 & 0.361 & 0.366 & 0.374 & 0.015 \\
SubSumE   & \texttt{random}       & 0.343 & 0.355 & 0.359 & 0.362 & 0.019 \\
ECTSum    & \texttt{anchor\_dpp}  & 0.449 & 0.443 & 0.432 & 0.423 & 0.026 \\
ECTSum    & \texttt{pattern\_dpp} & 0.442 & 0.444 & 0.426 & 0.425 & 0.019 \\
ECTSum    & \texttt{bm25}         & 0.372 & 0.370 & 0.364 & 0.368 & 0.008 \\
ECTSum    & \texttt{random}       & 0.321 & 0.335 & 0.340 & 0.344 & 0.023 \\
\bottomrule
\end{tabular}
\end{table*}

\begin{table*}[t]
\centering
\caption{Equal-compute comparison between ICL0 and Ens4.
ICL0$\times$4 ensembles four calls at
$T\in\{0.3,0.5,0.8,1.0\}$. 95\% CIs computed via paired bootstrap.}
\label{tab:app-icl0-equal-compute}
\small
\begin{tabular}{lrrrl}
\toprule
Dataset & ICL0 & ICL0$\times$4 & Ens4 & Ens4 $-$ ICL0 [95\% CI] \\
\midrule
ECTSum        & 0.350 & 0.397 & \textbf{0.516} & +0.120 [+0.109, +0.130] \\
Evidence Inf. & 0.206 & 0.274 & \textbf{0.304} & +0.019 [+0.015, +0.023] \\
HotpotQA      & 0.749 & 0.788 & \textbf{0.839} & +0.047 [+0.043, +0.051] \\
PhysioNet     & 0.764 & 0.827 & \textbf{0.879} & +0.034 [+0.018, +0.050] \\
PUMA          & 0.648 & 0.663 & \textbf{0.814} & +0.144 [+0.138, +0.150] \\
SubSumE       & 0.289 & 0.370 & \textbf{0.464} & +0.090 [+0.081, +0.098] \\
ContractNLI   & 0.718 & 0.818 & \textbf{0.828} & +0.010 [$-$0.006, +0.008] \\
\bottomrule
\end{tabular}
\end{table*}

\FloatBarrier

\begin{table*}[t]
\caption{Cross-model validation. Default Ens4 configuration run on Llama3-8B, Qwen3-8B, and GPT-5.6-terra. GPT-5.6-terra results use seeded 500-sample test subsets; the other models use the original test sets. The Gemini-2.5-Flash-Lite rows reproduce entries of Table~\ref{tab:main} for reference. \emph{Best Single} is the post-hoc best individual ICL strategy.}
\label{tab:app-cross-model}
\centering
\footnotesize
\begin{tabular}{p{3.0cm}p{1.6cm}cll cc}
\toprule
Model & Dataset & ICL0 & Best Single (Strategy) & Ens4 & $\Delta$/ICL0 & $\Delta$/Best \\
\midrule
Gemini-2.5-Flash-Lite & PhysioNet & 0.764 & 0.795 (\texttt{random}) & \textbf{0.879} & +0.115 & +0.080 \\
Gemini-2.5-Flash-Lite & PUMA & 0.648 & 0.777 (\texttt{anchor\_dpp}) & \textbf{0.814} & +0.165 & +0.036 \\
Gemini-2.5-Flash-Lite & ECTSum & 0.350 & 0.469 (\texttt{anchor\_dpp}) & \textbf{0.516} & +0.166 & +0.047 \\
Gemini-2.5-Flash-Lite & HotpotQA & 0.749 & 0.749 (\texttt{anchor\_dpp}) & \textbf{0.839} & +0.090 & +0.090 \\
\midrule
Llama3-8B & PhysioNet & 0.488 & 0.405 (\texttt{random}) & \textbf{0.524} & +0.036 & +0.119 \\
Llama3-8B & PUMA & 0.555 & 0.603 (\texttt{random}) & \textbf{0.663} & +0.109 & +0.061 \\
Llama3-8B & ECTSum& 0.220 & 0.178 (\texttt{anchor\_dpp}) & \textbf{0.258} & +0.038 & +0.080 \\
Llama3-8B & HotpotQA  & \textbf{0.372} & 0.235 (\texttt{pattern\_dpp}) & 0.370 & $-$0.002 & +0.135 \\
\midrule
Qwen3-8B& PhysioNet & 0.437 & 0.455 (\texttt{random}) & \textbf{0.600} & +0.163 & +0.145 \\
Qwen3-8B & PUMA & 0.490 & 0.530 (\texttt{anchor\_dpp}) & \textbf{0.597} & +0.106 & +0.066 \\
Qwen3-8B& ECTSum & 0.225 & 0.188 (\texttt{anchor\_dpp}) & \textbf{0.258} & +0.033 & +0.070 \\
Qwen3-8B& HotpotQA  & 0.427 & 0.323 (\texttt{anchor\_dpp}) & \textbf{0.471} & +0.043 & +0.148 \\
\midrule
GPT-5.6-terra & PhysioNet & 0.843 & 0.881 & \textbf{0.921} & +0.078 & +0.040 \\
GPT-5.6-terra & PUMA & 0.712 & 0.838 & \textbf{0.856} & +0.143 & +0.018 \\
GPT-5.6-terra & ECTSum & 0.478 & 0.591 & \textbf{0.612} & +0.134 & +0.021 \\
GPT-5.6-terra & HotpotQA & 0.910 & 0.905 & \textbf{0.923} & +0.014 & +0.018 \\
\bottomrule
\end{tabular}
\end{table*}

\subsection{Cross-Model Validation}
\label{sec:app-cross-model}

\Cref{sec:ablations} introduced cross-model evidence as one of three robustness checks. We run the same untuned Ens4 default configuration on two open-weight LLMs, Llama3-8B and Qwen3-8B, across four representative datasets spanning medical, financial, and general-QA domains: PhysioNet, PUMA, HotpotQA, and ECTSum. We additionally evaluate GPT-5.6-terra on seeded 500-sample test subsets of all seven datasets. Table~\ref{tab:app-cross-model} reports the same four representative datasets for direct cross-model comparison.

\subsection{Task-Hint Format Ablation}
\label{sec:app-task-hint}

The Ens4 default configuration uses a one-line task hint (e.g., \textit{``Task: Identify sentences relevant to the intent.''}) prepended to the prompt. The hint is not part of the ensemble framework---it is an input to every constituent scoring function--- and is therefore optional, but its presence can help the LLM scorer by preparing it for ICL examples.  This section tabulates the importance of the task hint across all 7 datasets and three hint regimes: \emph{short} (the default one-line hint), \emph{full} (a multi-sentence task description), and \emph{no hint} (ICL examples only).

Across the 7 datasets we observe three tiers of sensitivity to the task hint (Table~\ref{tab:hint-ablation}):

\emph{Non-semantic} (PhysioNet): the Ens4 MAP collapses from $0.879$ to $0.549$ without the hint. Personal health information detection cannot be inferred from the contrastive ICL examples alone, because the positive class (personal health information spans) is defined by a regulatory criterion rather than a semantic theme. The scoring LLM is not provided regulatory criteria as part of its context.

\emph{Domain-specific} (Evidence Inference, ECTSum): each loses $\sim$0.04 MAP without the hint, yet the no-hint Ens4 still beats \emph{every} baseline on these datasets, so the hint is helpful but not critical for the Conformal Relevance framework.

\emph{Transparent} (HotpotQA, PUMA, SubSumE, ContractNLI): the relevance criterion is recoverable from ICL examples alone, and the hint's MAP impact is $\leq 0.01$ in either direction. Expanding the hint to a multi-sentence task description (the \emph{full} column) does not consistently help: it improves the MAP on 2/7 datasets and slightly hurts it on 5/7.

\subsection{Stratified ICL Selection Ablation}
\label{sec:app-stratified}

For multi-intent datasets, ICL examples can be drawn from the same intent as the query document $x$ (\emph{stratified}) or from the full pool regardless of intent (\emph{non-stratified}).  Table~\ref{tab:stratified} shows that stratification helps on three of four multi-intent datasets ($+0.018$ to $+0.028$ MAP). The exception is Evidence Inference, where non-stratified selection is marginally better ($-0.006$), possibly because its eight user-defined intents are weakly correlated with relevance patterns.

\subsection{Windowed ICL Selection Ablation} \label{sec:app-windowed-ablation}
\Cref{tab:windowed-ablation} compares MAP with and without windowed ICL on SubSumE across three ICL budgets. Windowed ICL consistently improves MAP; the gain is largest occurs where full-document ICL examples overflow the context window and degrade coherence. At $k = 2$ the improvement is $+0.10$ MAP; at $k = 5$ it is $+0.16$ MAP.

The key design insight is to decouple signal density---which pool examples are selected---from context compression---how they are rendered in the prompt. Windowing compresses rendering without altering selection, so all retrieval strategies benefit.

\FloatBarrier

\begin{table*}[t]
\caption{Task-hint ablation. \textbf{Bold} marks the best Ens4 variant per dataset. ICL0 is shown for reference (uses the same short hint).}
\label{tab:hint-ablation}
\centering
\small
\begin{tabular}{lcccc}
\toprule
Dataset & ICL0 & Ens4 (short) & Ens4 (full) & Ens4 (no hint) \\
\midrule
ECTSum          & 0.350 & \textbf{0.516} & 0.500 & 0.476 \\
Evidence Inf.   & 0.206 & 0.304          & \textbf{0.316} & 0.266 \\
HotpotQA        & 0.749 & 0.839          & 0.832 & \textbf{0.841} \\
PhysioNet       & 0.764 & \textbf{0.879} & 0.872 & 0.549 \\
PUMA            & 0.648 & \textbf{0.814} & 0.806 & 0.804 \\
SubSumE         & 0.289 & \textbf{0.464} & 0.460 & 0.457 \\
ContractNLI     & 0.718 & 0.828          & \textbf{0.829} & 0.821 \\
\bottomrule
\end{tabular}
\end{table*}

\begin{table*}[t]
\caption{Effect of stratified ICL selection on multi-intent datasets. \textbf{Bold} marks the better configuration.}
\label{tab:stratified}
\centering
\small
\begin{tabular}{lcrrc}
\toprule
Dataset & Intents & Stratified & Non-stratified & $\Delta$ \\
\midrule
PUMA              & 5  & \textbf{0.814} & 0.787 & +0.028 \\
SubSumE           & 10 & \textbf{0.464} & 0.444 & +0.020 \\
ContractNLI       & 17 & \textbf{0.828} & 0.811 & +0.018 \\
Evidence Inf.     & 8  & 0.304          & \textbf{0.310} & $-$0.006 \\
\bottomrule
\end{tabular}
\end{table*}

\begin{table*}[t]
  \centering
  \caption{MAP for Windowed vs.\ full ICL on SubSumE (\texttt{random} strategy,
    Gemini-2.5-Flash-Lite). Windowing is beneficial at all ICL budgets; the gap grows with $k$ as full ICL examples become longer.}
  \label{tab:windowed-ablation}
  \small
  \begin{tabular}{@{}lcc@{}}
    \toprule
    ICL budget $k$ & Full ICL & Windowed ICL \\
    \midrule
    2 & 0.206 & 0.309 \\
    3 & 0.174 & 0.294 \\
    5 & 0.112 & 0.267 \\
    \bottomrule
  \end{tabular}
\end{table*}

\FloatBarrier

\section{Empirical Validation of Ensemble Theory}
\label{sec:app-theory-validation}

This appendix expands on the per-sample validation summarized in \Cref{sec:ablations} by checking the condition from Lemma~\ref{lem:complementarity} with bootstrap confidence intervals. We restrict to test samples with more than one positive span.

\paragraph{\Cref{lem:complementarity} condition.}
\Cref{lem:complementarity} states that $\mathrm{Comp}(R_j, R_k) > S_{\max} - S_{\min} \iff  S^{(2)} > S_{\max}$. \Cref{tab:app-prop1-agreement} reports the co-occurrence of these two events per dataset with 95\% bootstrap CIs, using a variant scoring function with $K=2$, as required by the theory. Although the agreement level is high, it is not exactly 100\%. We observed that all $1{,}648$ disagreements across all datasets together can be attributed to floating-point ties where $|S^{(2)} - S_{\max}| < 10^{-12}$ or $\mathrm{Comp} - (S_{\max} - S_{\min}) < 10^{-12}$, which often occurs when $S_{\max} - S_{\min}$ is exactly zero.

\begin{table}[h]
  \centering
  \caption{Co-occurrence of $\mathrm{Comp}(R_j, R_k) > S_{\max} - S_{\min}$ and $S^{(2)} > S_{\max}$. 95\% CIs are bootstrapped with 1000 resamples.}
  \label{tab:app-prop1-agreement}
  \begin{tabular}{lrl}
    \toprule
    Dataset& $n$  & Agreement (95\% CI) \\
    \midrule
    ECTSum        & 11{,}514     & 0.995 \,[0.994, 0.997]\\
    Evidence Inf. & 35{,}580     & 0.998 \,[0.998, 0.999]\\
    HotpotQA      & 59{,}922     & 0.999 \,[0.998, 0.999]\\
    PhysioNet     &  1{,}482     & 0.999 \,[0.998, 1.000]\\
    PUMA          & 27{,}354     & 0.994 \,[0.993, 0.995]\\
    SubSumE       & 10{,}812     & 0.995 \,[0.994, 0.996]\\
    ContractNLI   & 18{,}252     & 0.999 \,[0.999, 1.000]\\
    \midrule
    \textbf{Pooled} & \textbf{471{,}378} & \textbf{0.9955}\\
    \bottomrule
  \end{tabular}
\end{table}

\newpage
\section{Conformal Conciseness Across $\alpha$}
\label{sec:app-alpha-sweep}

\Cref{fig:conciseness} showed the conciseness metric for a range of recall levels $\beta$, and a fixed target coverage $1-\alpha=0.8$. Here we expand those results to additional values of $\alpha$. \Cref{tab:app-alpha-sweep} reports the default Ens4 configuration at $\alpha \in \{0.05, 0.10, 0.20\}$ with $\beta = 0.8$ fixed, while \Cref{fig:app-conciseness-multialpha} shows sweeps over $\beta$ for the smaller $\alpha$ values.  Ens4 improves conciseness over ICL0 on all seven datasets at $\alpha = 0.20$. and onmost datasets at $\alpha = 0.05$ and $\alpha = 0.10$. 

The biggest outlier is Evidence Inference at $\alpha = 0.05$. Evidence Inference's relevance criterion --- whether a stated outcome agrees with experimental evidence --- is non-semantic. We showed in the task hint ablation from App.~\ref{sec:app-task-hint}, \Cref{tab:hint-ablation}, that Evidence Inference is one of two non-semantic datasets that benefits from a lengthier task hint describing what relevance means in that context. \Cref{fig:eviinf-fulltask} shows that including a more descriptive hint (Ens4-FullTask) recovers a positive conciseness gap across all $\beta$.

\begin{table*}[h]
\centering
\caption{Mean conciseness (fraction of irrelevant sentences excluded from the prediction set) for ICL0 and Ens4 across $\alpha \in \{0.05, 0.10, 0.20\}$ at $\beta = 0.8$.}
\label{tab:app-alpha-sweep}
\begin{tabular}{l|cc|cc|cc}
\toprule
& \multicolumn{2}{c|}{$\alpha = 0.05$} & \multicolumn{2}{c|}{$\alpha = 0.10$} & \multicolumn{2}{c}{$\alpha = 0.20$} \\
Dataset & ICL0 & Ens4 & ICL0 & Ens4 & ICL0 & Ens4 \\
\midrule
ContractNLI & 0.097 & 0.335 & 0.204 & 0.606 & 0.849 & 0.940 \\
ECTSum & 0.046 & 0.075 & 0.156 & 0.153 & 0.272 & 0.355 \\
Evidence Inf. & 0.305 & 0.168 & 0.542 & 0.611 & 0.755 & 0.803 \\
HotpotQA & 0.268 & 0.657 & 0.368 & 0.760 & 0.724 & 0.868 \\
PhysioNet & 0.152 & 0.616 & 0.426 & 0.716 & 0.717 & 0.820 \\
PUMA & 0.042 & 0.352 & 0.121 & 0.451 & 0.300 & 0.515 \\
SubSumE & 0.052 & 0.445 & 0.122 & 0.536 & 0.228 & 0.768 \\
\bottomrule
\end{tabular}
\end{table*}

\begin{figure*}[h]
\centering
\begin{subfigure}{0.49\linewidth}
  \centering
  \includegraphics[width=\linewidth, keepaspectratio]{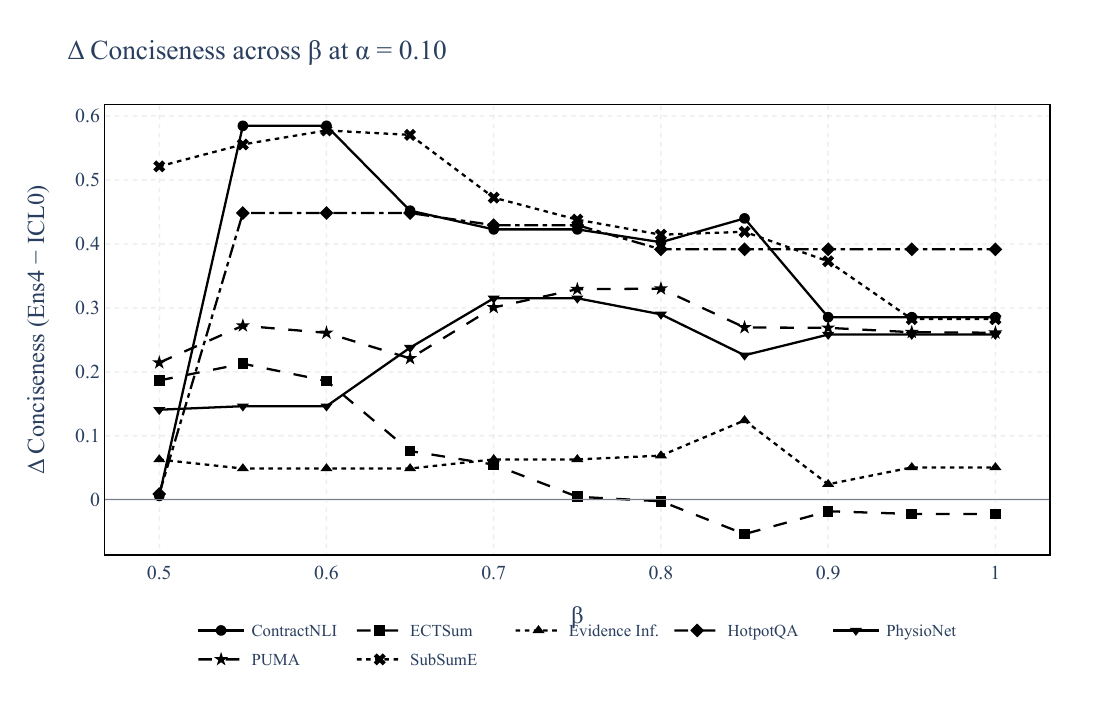}
  \caption{$\alpha = 0.10$.}
  \label{fig:app-conciseness-alpha010}
\end{subfigure}\hfill
\begin{subfigure}{0.49\linewidth}
  \centering
  \includegraphics[width=\linewidth, keepaspectratio]{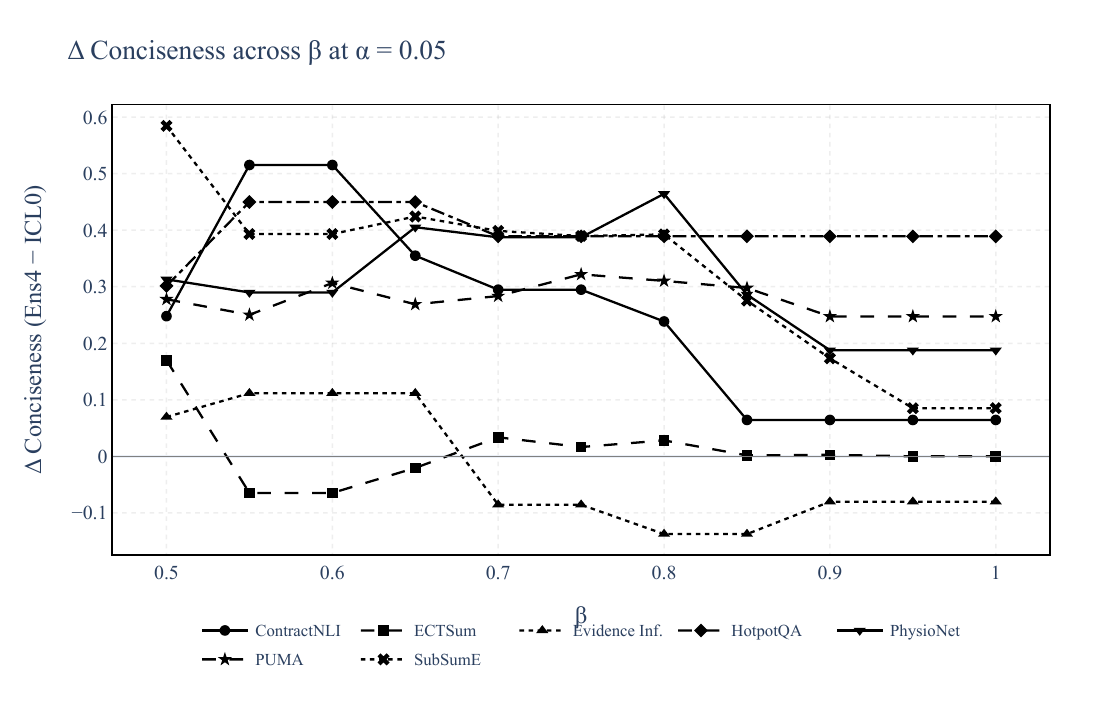}
  \caption{$\alpha = 0.05$.}
  \label{fig:app-conciseness-alpha005}
\end{subfigure}
\caption{Conciseness improvement $\text{Ens4} - \text{ICL0}$ over $\beta$ at stricter coverage targets, complementing Figure~\ref{fig:conciseness}.}
\label{fig:app-conciseness-multialpha}
\end{figure*}

\begin{figure*}[h]
\centering
\includegraphics[width=0.65\linewidth]{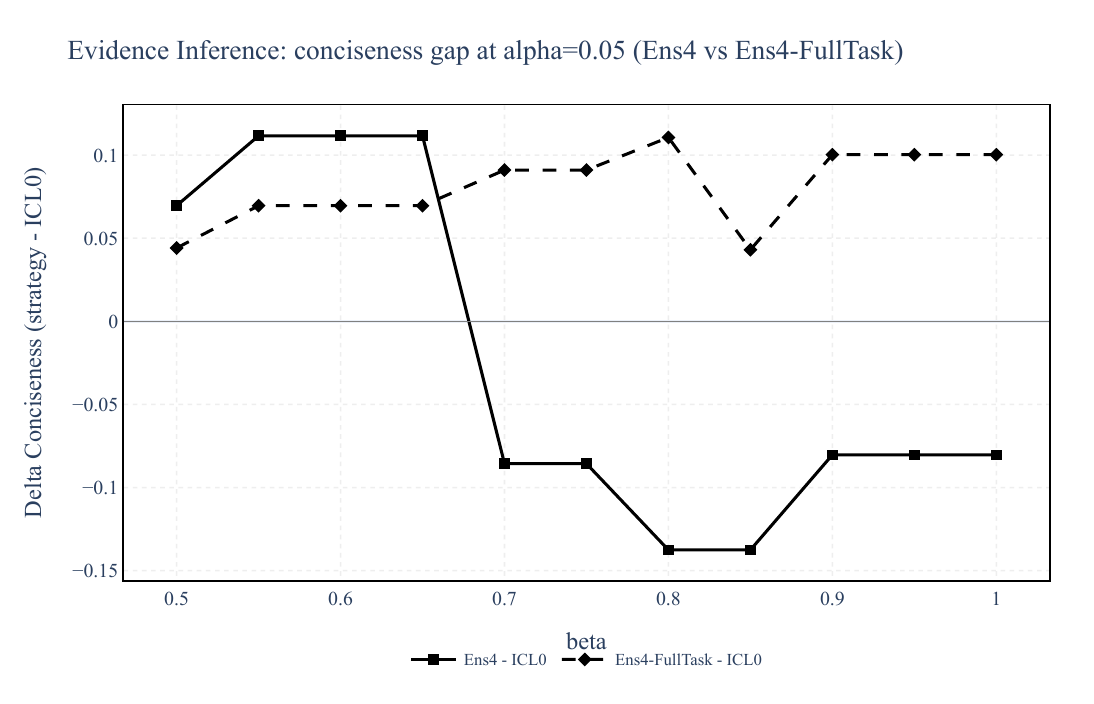}
\caption{Conciseness improvement $\text{Ens4} - \text{ICL0}$ at $\alpha = 0.05$ on Evidence Inference. This dataset benefits from a lengthier manually crafted prompt describing the task, here shown as Ens4-FullTask (dashed). This variant recovers a positive gap across all $\beta$.}
\label{fig:eviinf-fulltask}
\end{figure*}

\begin{table*}[t]
\centering
\caption{Comparison of conformal ensemble rules using the same four
Ens4 constituent scoring functions. Coverage deviation is empirical
minus target coverage, averaged across seven datasets (percentage
points; positive values indicate over-coverage). Conciseness is the
fraction of sentences removed at $(\alpha,\beta)=(0.2,0.8)$; higher
is better.}
\label{tab:conformal-ensemble-rules}
\small
\begin{tabular}{lcccc}
\toprule
& \multicolumn{3}{c}{Coverage deviation (pp)} & \\
\cmidrule(lr){2-4}
Method & $\alpha{=}0.05$ & $\alpha{=}0.10$ & $\alpha{=}0.20$
       & Conciseness \\
\midrule
Majority vote & +1.3 & +0.5 & $-$1.4 & 0.664 \\
Confidence-Level Allocation & +1.7 & +3.6 & +5.5 & 0.426 \\
Union of sets & +4.8 & +9.0 & +15.1 & 0.364 \\
Average single scorer & +0.2 & +0.2 & +0.3 & 0.586 \\
Mean Ensembling (Ens4) & +0.0 & +0.1 & +0.2 & \textbf{0.711} \\
\bottomrule
\end{tabular}
\end{table*}

\subsection{Alternative Conformal Ensemble Rules}
\label{sec:app-conformal-ensembles}

We compare score-level mean ensembling with alternative conformal ensemble rules while keeping the four Ens4 constituent scoring functions fixed. The rules we test are:
\begin{itemize}
    \item \textbf{Majority Vote} \cite{gasparin2024merging} This method uses each scoring function to generate a conformal set, and then merges those sets keeping only elements that appear in the majority. If each individual scorer's set guarantees $1-\alpha$ coverage, the majority vote set only guarantees $1-2\alpha$ coverage, which can lead to undercoverage.
    \item \textbf{COnfidence-Level Allocation} \cite{xu2025aggregating} This method uses each scoring function to generate a conformal set, and then takes the intersection of those sets. Using the same $1-\alpha$ target coverage for each scorer in this way would lead to severe undercoverage. Instead COLA defines a different $\alpha_j$ for each scorer by optimizing them jointly against a set-size metric.
    \item \textbf{Union of Sets} \citep{yang2025selection} This simple method generates a conformal set with each scoring function, and returns the union. The union guarantees $1-\alpha$ coverage, but is typically very conservative and overcovers.
   \item \textbf{Average Single Scorer} This baseline runs each scoring function independently through the standard conformal pipeline, with its own calibrated threshold and prediction sets. We report the average coverage deviation and conciseness across the four scorers and seven datasets. No ensembling occurs; each scorer is evaluated independently, and only the resulting evaluation metrics are averaged.

\end{itemize}

Following \Cref{sec:conformal-results}, we repeat evaluation over 400 random calibration/test splits, with $\beta{=}0.8$ and $\alpha\in\{0.05,0.10,0.20\}$. Table~\ref{tab:conformal-ensemble-rules} reports coverage deviation from the target, averaged across all seven datasets, and conciseness at $(\alpha,\beta)=(0.2,0.8)$.

\end{document}